\documentclass[11pt]{article}

\usepackage{microtype}
\usepackage{graphicx}
\usepackage{subcaption}
\usepackage{booktabs}
\usepackage{microsoft-tech-report}
\usepackage{hyperref}
\usepackage{url}
\usepackage[utf8]{inputenc}
\usepackage[T1]{fontenc}
\usepackage{amsmath}
\usepackage{amssymb}
\usepackage{amsfonts}
\usepackage{mathtools}
\usepackage{amsthm}
\usepackage{bm}
\usepackage{nicefrac}
\usepackage{enumitem}
\usepackage{multirow}
\usepackage[capitalize,noabbrev]{cleveref}
\usepackage{xspace}
\usepackage{pifont}
\usepackage{wrapfig}
\usepackage{algorithm}
\usepackage{algpseudocode}
\usepackage{amsthm}
\newtheorem{theorem}{Theorem}
\newtheorem{lemma}{Lemma}
\newtheorem{proposition}{Proposition}
\newtheorem{corollary}{Corollary}

\usepackage{placeins}
\usepackage{booktabs}
\usepackage{multirow}
\usepackage{graphicx}
\graphicspath{{figs/}}
\usepackage{adjustbox}
\usepackage[table]{xcolor}

\newcommand{\best}[1]{\textbf{#1}}
\definecolor{oursgray}{RGB}{242,242,242}

\definecolor{gainred}{RGB}{180,60,60}

\newcommand{\gain}[1]{%
  {\scriptsize\textcolor{gainred}{\hspace{1pt}+#1}}%
}

\newcommand{\methodspace}{\addlinespace[1.2pt]}

\definecolor{deltapos}{HTML}{0E8A4A}
\definecolor{deltaneg}{HTML}{B91C1C}

\techreportlabel{Microsoft Research}
\techreportshorttitle{VidForensics-M1}

\hypersetup{
  colorlinks=true,
  linkcolor=msftblue,
  citecolor=msftblue,
  urlcolor=msftblue,
  pdftitle={VidForensics-M1: Meta-Detection Reinforcement Learning with Verifiable Temporal Grounding for AI-Generated Video Forensics}
}

\begin{document}

\thispagestyle{empty}


\noindent
\begin{minipage}[c]{0.55\linewidth}
\raggedright

\includegraphics[height=1.5cm]{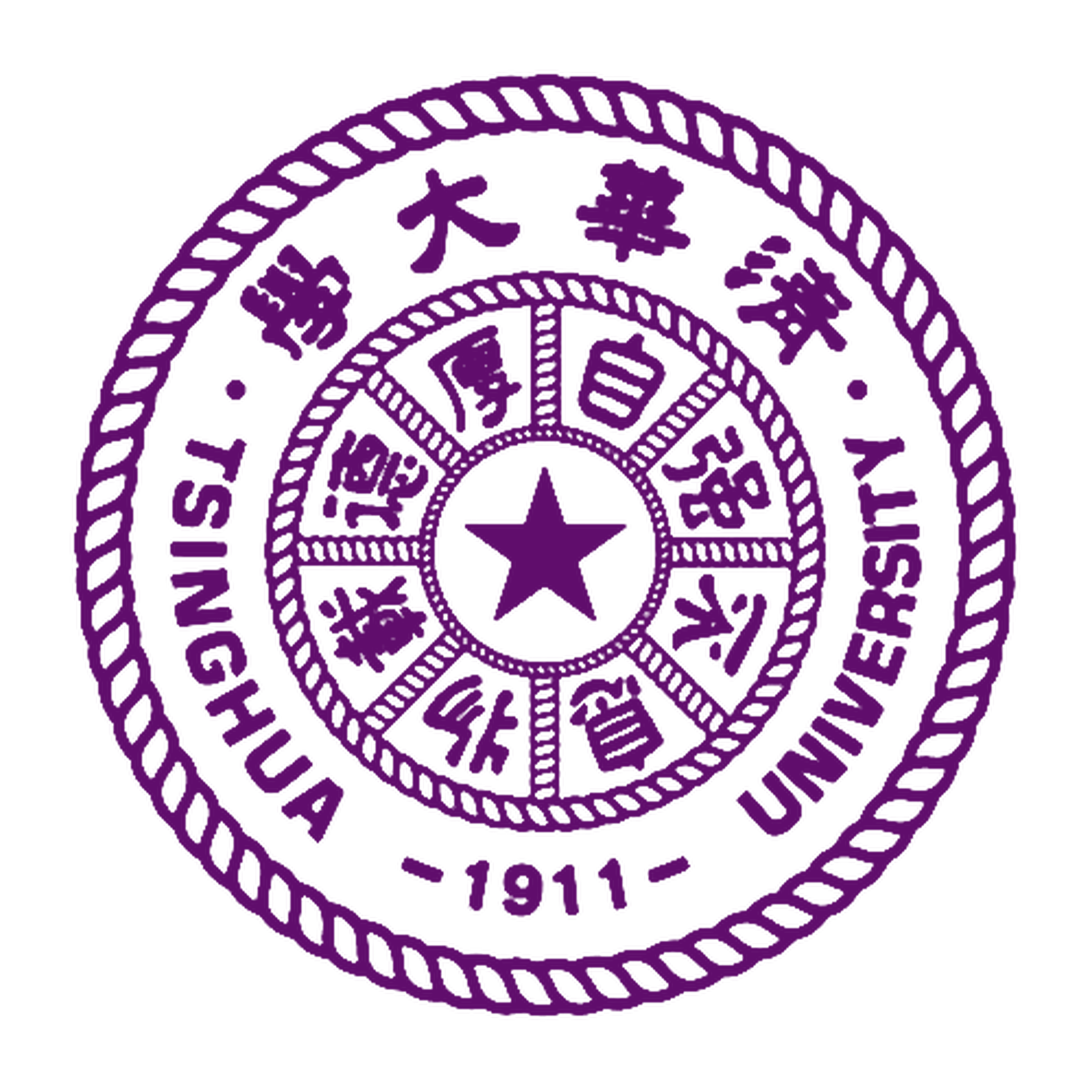}
\hspace{0.25cm}
\includegraphics[height=1.5cm]{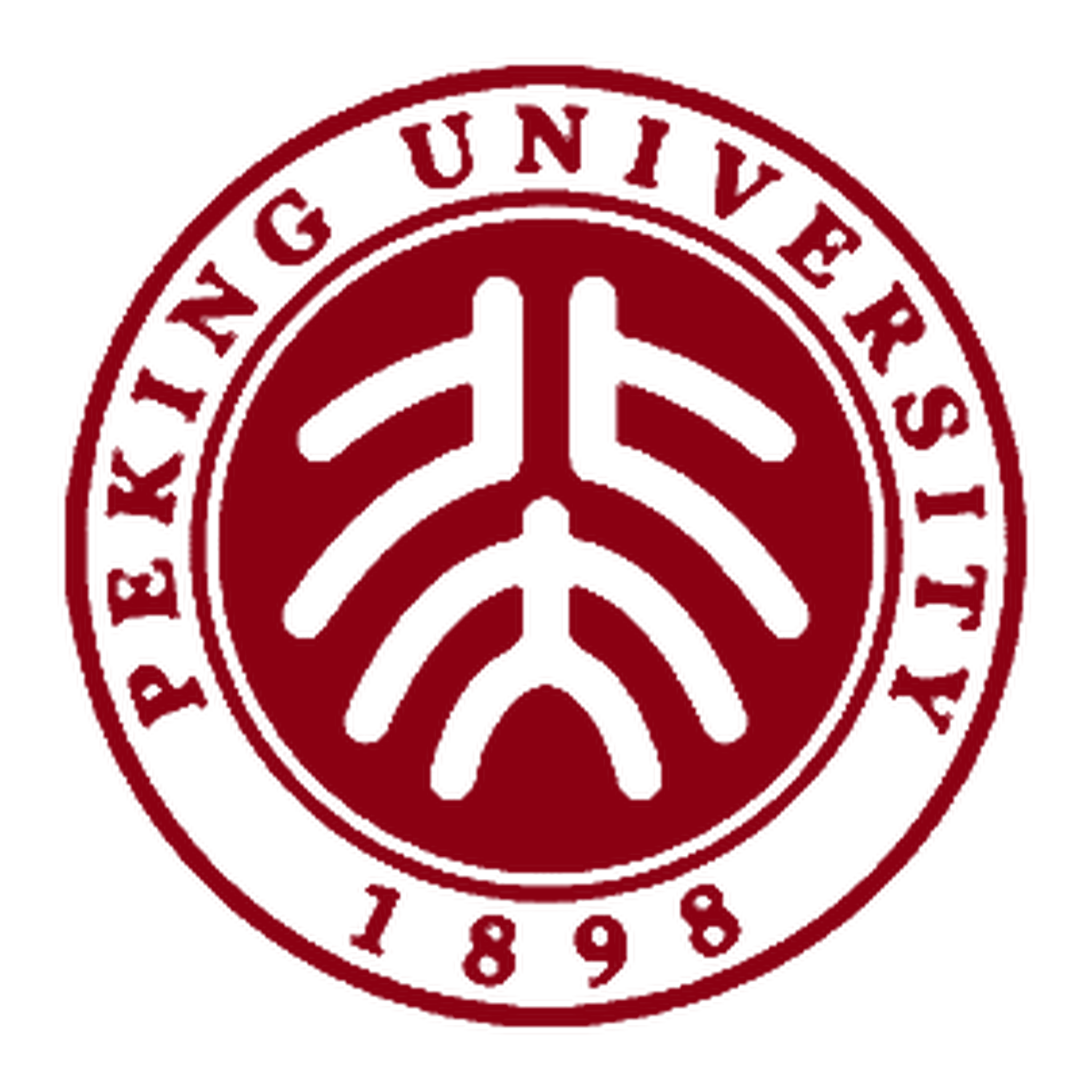}
\hspace{0.25cm}
\includegraphics[height=1.5cm]{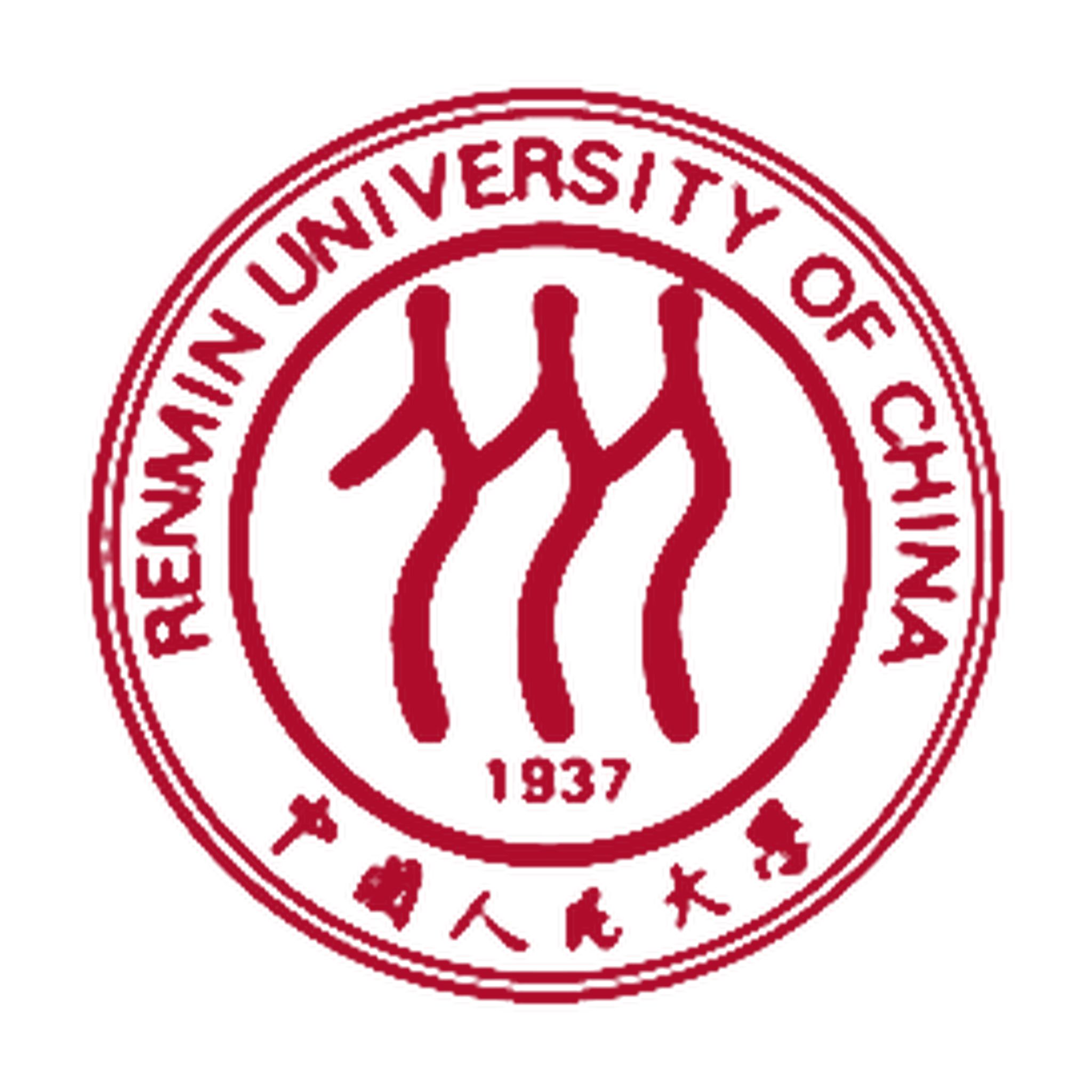}

\end{minipage}%
\begin{minipage}[c]{0.44\linewidth}
\raggedleft

\includegraphics[height=0.65cm]{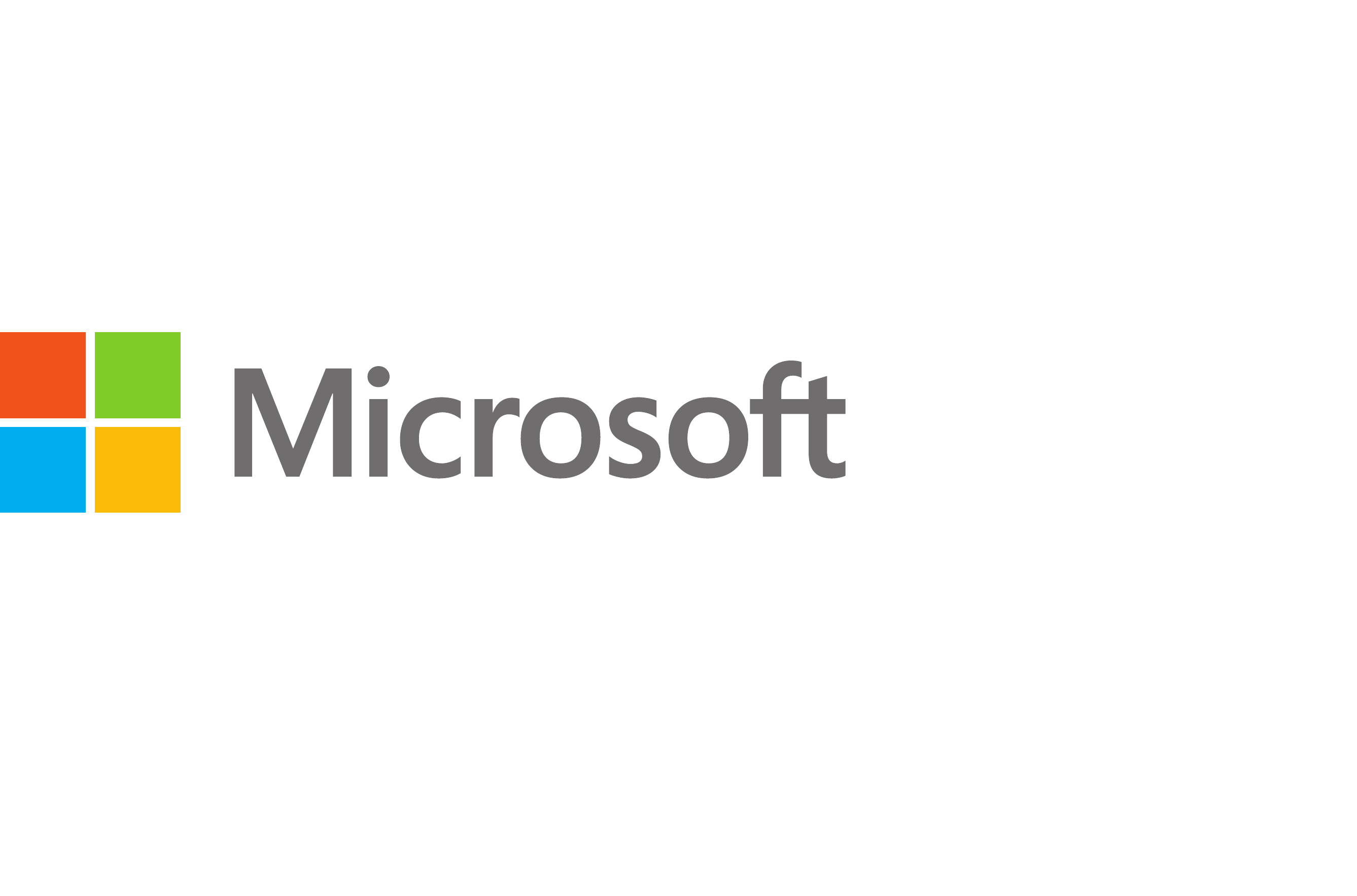}

\vspace{0.15cm}


\end{minipage}\par

\vspace{0.35em}

\noindent{\color{msftline}\rule{\linewidth}{0.8pt}\par}

\vspace{1.0em}

\begin{center}

{{\msfttitlefont\fontsize{13}{20}\selectfont\color{msftdark}
VidForensics-M1: Meta-Detection Reinforcement Learning with Verifiable Temporal Grounding for AI-Generated Video Forensics\par}}

\vspace{1.25em}

{\normalsize\rmfamily\color{msftdark}

Bowei Liu$^{1,*}$ \hspace{0.6em}
Zheng Lu$^{2,*}$ \hspace{0.6em}
Yuhan Bian$^{3,*}$ \hspace{0.6em}
Xinchen Zhang$^{1,*}$ \hspace{0.6em}
Xingming Shui$^{1}$\\[-0.1em]

Yuesheng Huang$^{1}$ \hspace{0.6em}
Xuhuan Li$^{1}$ \hspace{0.6em}
Zihao Liu$^{1}$ \hspace{0.6em}
Yifan Yang$^{4}$ \hspace{0.6em}
Jun Zhou$^{1}$ \hspace{0.6em}
Xiu Li$^{1,\ddagger}$\par

}

\vspace{0.22cm}

{\footnotesize\rmfamily\color{msftgray}

$^{1}$ Tsinghua University \quad
$^{2}$ Peking University \quad
$^{3}$ Renmin University of China \quad
$^{4}$ Microsoft\par

}

{\footnotesize\rmfamily\itshape\color{msftgray}

$^{*}$: Equal Contribution. \ \  
$^{\ddagger}$: Corresponding Authors.\par

}

\end{center}

\vspace{0.45em}

\begin{msfttitlebox}

\setlength{\parindent}{0cm}
\setlength{\parskip}{0.14cm}
\raggedright
\nohyphens

Recent advances in video generation models have dramatically improved the realism of synthetic videos, blurring the boundary between generated and authentic content and raising significant concerns about misinformation.
Existing MLLM-based detectors predominantly rely on supervised fine-tuning or label-level reinforcement learning, where predefined or coarse-grained supervision signals limit their generalization to out-of-domain scenarios and emerging video generators. 
To overcome these limitations, we are the first to introduce the concept of \textbf{meta-detection} into AI-generated video detection, which enables reliable forgery detection by jointly evaluating the predicted label and the supporting evidence within reinforcement learning.
This paradigm presents two core challenges: 
(1) identifying evidence forms that provide reliable and scalable supervision for meta-detection, and 
(2) developing effective mechanisms to integrate such evidence into label-level reinforcement learning for trustworthy and generalizable synthetic video detection.
Textual rationales offer semantically rich descriptions of forgery artifacts, yet their generation and verification rely heavily on external reference models, making the resulting supervision vulnerable to hallucinations and semantic biases. 
In contrast, temporal grounding provides a more objective and verifiable signal, as manipulated temporal intervals can be precisely determined through controlled forgery construction. 
Based on this, we propose an automated data construction pipeline that generates paired real-fake videos by reconstructing and replacing temporal segments using boundary-frame-conditioned video generation models.
Furthermore, we then propose \textbf{Evidence-Guided Reward Redistribution}, which performs evidence-aware credit assignment by redistributing rewards among label-correct responses according to their evidence quality, thereby preserving reliable label supervision while progressively encouraging the detector to acquire fine-grained and verifiable forgery localization capabilities.
Extensive experiments demonstrate that \textbf{VidForensics-M1} effectively leverages verifiable temporal evidence to achieve more robust and generalizable AI-generated video detection.

\vspace{0.14cm}

{\setlength{\parskip}{0.06cm}\small



}

\vspace{-0.08cm}

\end{msfttitlebox}


\section{Introduction}

The rapid evolution of generative foundation models has significantly advanced the visual fidelity of AI-generated videos \cite{hou2026survey, li2026cubecomposer,xiong2026evatok, xu2025smrabooth,xu2025hunyuanportrait,zhang2026zo3t}, making them increasingly difficult to distinguish from authentic content. While these models have enabled unprecedented progress in digital content creation, their widespread accessibility also introduces significant challenges to media authenticity and misinformation mitigation \cite{davodi2026perceptual}. Consequently, developing reliable methods for AI-generated video authenticity verification has emerged as a critical task for ensuring trustworthy digital media \cite{vaccari2020deepfakes, chandra2024reducing}.

\begin{wrapfigure}{r}{0.5\textwidth}
\vspace{-4mm}
    \centering
    \includegraphics[width=0.5\textwidth]{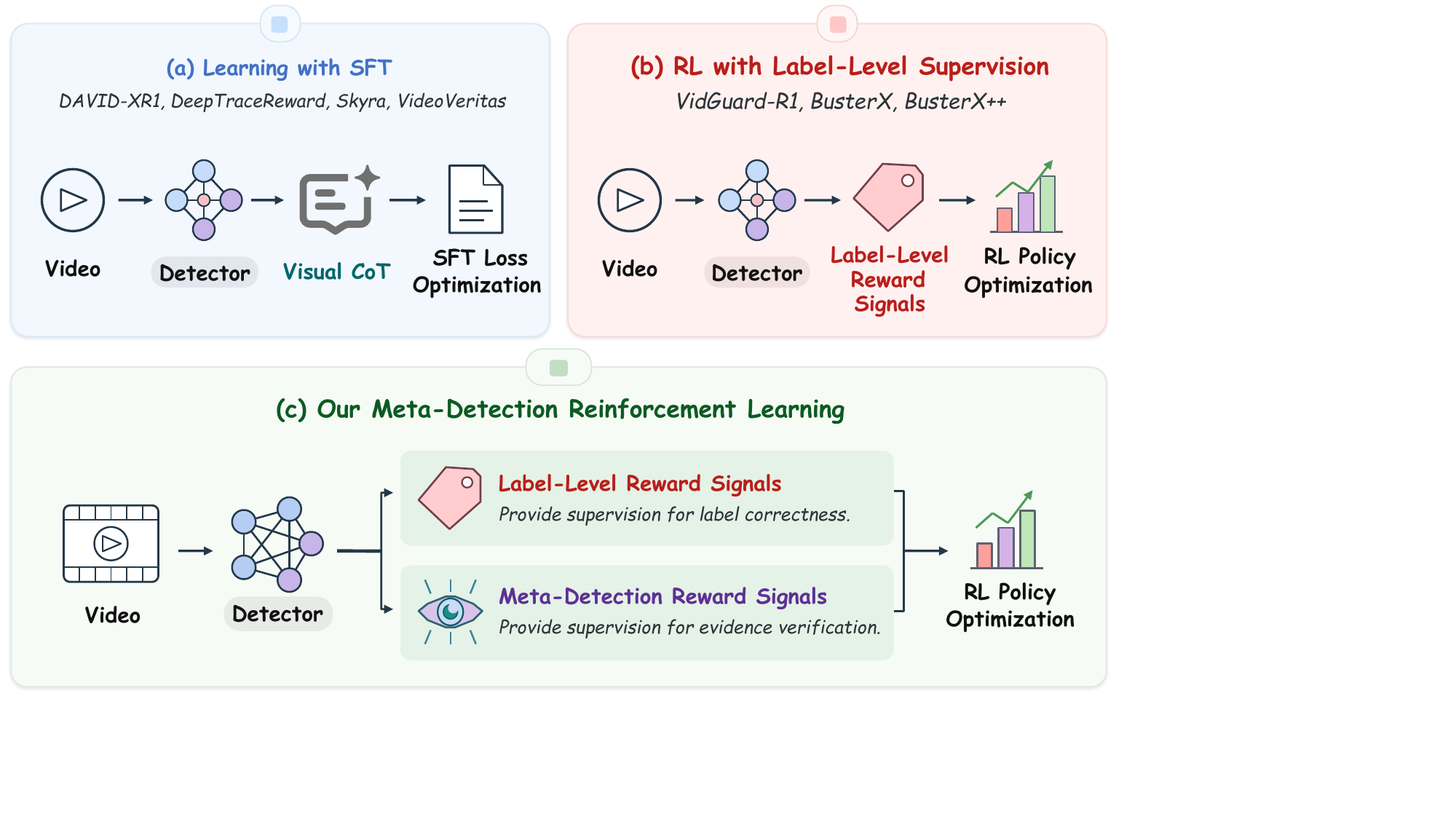}
    \vspace{-5mm}
    \caption{Comparison of MLLM-Based AI-Generated Video Detection Methods.}
    \label{fig1}
    \vspace{-3mm}
\end{wrapfigure}

As shown in Fig. \ref{fig1}, existing MLLM-based detection methods can be broadly categorized into two paradigms. The first paradigm learns fine-grained artifact perception through supervised fine-tuning on visual chain-of-thought annotations provided by human annotators or teacher models \cite{gao2025davidxr1, fu2025deeptracereward, li2026skyra,tan2026videoveritas, corvi2025seeing, NEURIPS2022_9d560961}. Although these approaches enable detectors to capture fine-grained visual forgery artifacts, imitating predefined reasoning trajectories can lead to overfitting to specific annotation patterns or training distributions, thereby limiting their generalization ability in open-world scenarios. The second paradigm leverages reinforcement learning to improve generalization across diverse video generation models and data distributions. However, existing approaches mainly rely on label-level supervision signals, which may encourage detectors to exploit spurious correlations or superficial cues rather than identifying the underlying forgery artifacts \cite{park2026vidguardr, wen2025busterxpp, 3294996.3295184, amodei2016concreteproblemsaisafety, Guo_2025}.

Motivated by these limitations, we introduce evidence-aware supervision into reinforcement learning and term this paradigm \textbf{meta-detection}. By jointly optimizing label correctness and evidence validity, meta-detection enables more reliable synthetic video detection. This evidence-aware feedback provides a more informative and verifiable learning signal for policy optimization, guiding detectors toward identifying and localizing visual forgery artifacts and ultimately achieving more robust and generalizable AI-generated video detection.

A core question is what forms of evidence can serve as trustworthy and scalable supervision for meta-detection? Although textual rationales provide semantically rich descriptions of forgery artifacts, they are typically generated by powerful reference models, meaning that the resulting supervision may inherit the biases and limitations of the teacher models rather than represent objective and verifiable ground truth \cite{gao2025davidxr1, fu2025deeptracereward, turpin2023language, li2026skyra, he2021forgerynet}. In contrast, temporal grounding provides a more objective and verifiable supervision signal, as the manipulated temporal intervals are explicitly determined through the controlled forgery construction process \cite{wang-etal-2025-grounded}.

Based on this, we propose an automated and scalable data construction pipeline that generates paired real-fake videos with verifiable temporal evidence for meta-detection. As shown in Fig. \ref{fig2}, we randomly sample a temporal interval from each real video and extract its preceding and succeeding segments as boundary contexts. The corresponding boundary frames are then provided to video generation models to reconstruct the missing content, which is inserted between the two contexts to form the corresponding fake video. This controlled construction process naturally provides ground-truth manipulated temporal intervals for each forged video. Furthermore, we leverage strong  models to describe observable forgery artifacts in each fake video, including geometric deformation, temporal inconsistency, and physical violation.

Another core question is how to effectively incorporate meta-detection feedback into label-level reinforcement learning to enable trustworthy and generalizable synthetic video detection?  To address this, we propose Evidence-Guided Reward Redistribution (EGRR), which reallocates rewards among label-correct responses according to their evidence quality. As shown in Fig. \ref{fig3}, EGRR maintains stable label-level optimization while calibrating the reward distribution based on evidence reliability, encouraging detectors to recognize fine-grained visual forgery artifacts.

Our contributions can be summarized as follows:
\begin{itemize}
    \item We are the first to introduce the concept of \textbf{meta-detection} into AI-generated video detection and incorporate it into reinforcement learning as an evidence-aware feedback signal.

    \item We demonstrate that rule-based temporal grounding provides more reliable and verifiable feedback than model-based textual rationales, making it a more suitable evidence source for meta-detection.

    \item We propose an automated and scalable data construction pipeline through boundary-frame-conditioned temporal segment reconstruction and replacement, which constructs paired real-fake videos while providing ground-truth manipulated temporal intervals.

    \item We propose Evidence-Guided Reward Redistribution (EGRR), which reallocates rewards among label-correct responses according to their evidence quality, enabling effective calibration of label-level reinforcement learning with meta-detection feedback.
\end{itemize}

\section{Related Work}

\paragraph{AI-Generated Video Detection Methods.}
Early research on visual AIGC forensics predominantly explored
image-level forgery detection \cite{wang2020cnn,ojha2023universal, yan2024transcending, tan2024rethinking, nguyen2024laa, fu2025exploring,yan2025sanity,yang2025all}.
As video generation models continue to advance, increasing attention has
been devoted to detecting AI-generated videos \cite{hou2026survey}.
DAVID-XR1 \cite{gao2025davidxr1} distills teacher-generated visual chains of thought grounded in
defect categories and spatio-temporal annotations.
DeeptraceReward \cite{fu2025deeptracereward} trains multimodal reward models from human rationales,
bounding boxes, and temporal intervals.
Skyra \cite{li2026skyra} generates grounded artifact analyses containing temporal intervals
and spatial locations.
VidGuard-R1 \cite{park2026vidguardr} combines an SFT cold start with GRPO and specialized rewards
for temporal artifacts and generation complexity.
BusterX \cite{wen2025busterx} bypasses SFT and trains directly with reinforcement learning using
video clips and binary authenticity labels.
BusterX++ \cite{wen2025busterxpp} extends this pure-RL formulation to unified image and video
detection, preserving policy entropy to encourage cross-modal
exploration.
VideoVeritas \cite{tan2026videoveritas} improves fine-grained video perception through automatically
verifiable pretext tasks, including grounding and counting, rather than
directly optimizing the detection objective.
Unlike existing approaches, we introduce \textbf{meta-detection} into AI-generated video detection, where evidence-aware feedback is incorporated into reinforcement learning to jointly optimize prediction correctness and evidence validity.

\paragraph{AI-Generated Video Detection Datasets.}
With the rapid advancement of generative models \cite{zhang2026omniverifier, zhang2026generative, zhang2024realcompo,zhang2025itercomp}, an increasing number of datasets have been introduced to facilitate research on AI-generated video detection \cite{fu2025learning, bai2024gvd, chen2026genvideo, ma2025gvf, wen2025busterx, gao2025davidxr1,fu2025deeptracereward,li2026skyra}.
GVD \cite{bai2024gvd} collects videos generated by diverse text-to-video and image-to-video
models, while GenVideo \cite{chen2026genvideo} scales the benchmark to millions
of real and synthetic videos.
GVF \cite{ma2025gvf} mitigates semantic shortcuts by constructing content-matched fake
videos from prompts extracted from the corresponding real
videos.
GenVidBench \cite{ni2026genvidbench} introduces cross-source and cross-generator evaluation
settings together with semantic annotations.
GenBuster-200K \cite{wen2025busterx} incorporates recent video generation models and
large-scale real-world videos for training MLLM-based
detectors.
Recent evidence-oriented datasets, including DAVID-X \cite{gao2025davidxr1}, DeeptraceReward \cite{fu2025deeptracereward},
and ViF-CoT-4K \cite{li2026skyra}, provide artifact rationales and spatio-temporal
annotations,
but do not derive exact manipulated intervals from a controlled
segment-replacement process. We address this gap by constructing paired
real-fake videos through controlled segment manipulation with ground-truth
temporal localization.
\section{From Model-Based Rationales to Rule-Based Temporal Feedback}

\begin{figure*}[t]
    \centering
    \includegraphics[width=\linewidth]{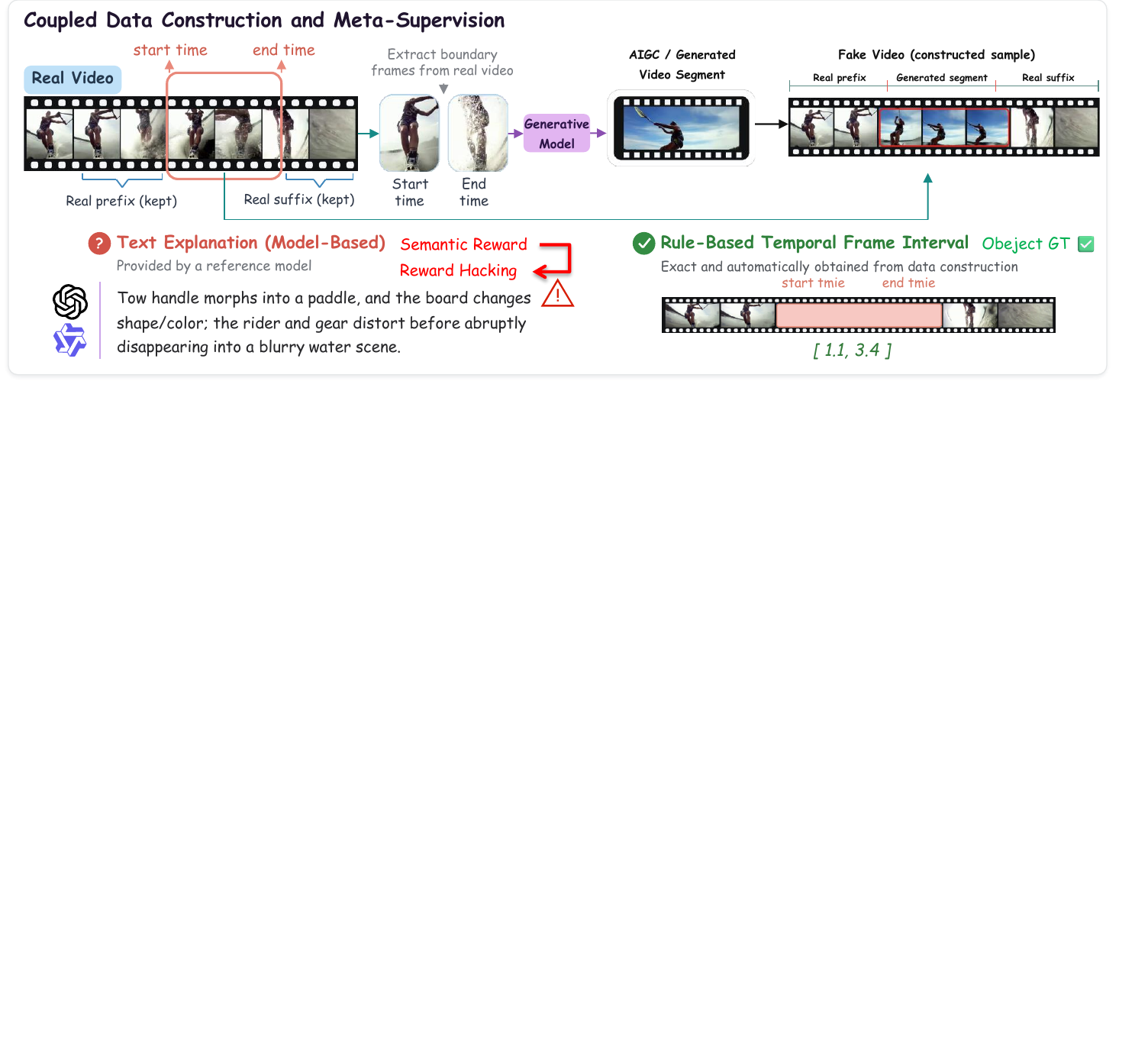}
    \vspace{-3mm}
    \caption{Overview of our automated and scalable data construction pipeline and comparison between two meta-detection feedback signals.}
    \label{fig2}
    \vspace{-1mm}
\end{figure*}

In this section, we first motivate the introduction of meta-detection into reinforcement learning. We then analyze the limitations of textual explanations as meta-detection feedback signals and introduce rule-based temporal grounding as a more reliable evidence source. Finally, we present an automated and scalable data construction pipeline for generating paired real-fake videos with verifiable temporal forgery localization annotations.

\subsection{Problem Definition}
Traditional RLVR-based video detectors are trained on a dataset
$\mathcal{D}=\{(V_n,y_n)\}_{n=1}^{N}$, where each video
$V_n=\{f_{n,t}\}_{t=1}^{T}$ consists of $T$ frames and
$y_n\in\{\mathrm{real},\mathrm{fake}\}$ denotes its authenticity label.
For each training video $V_i$, Group Relative Policy Optimization samples $G$ responses
$\{o_{ij}\}_{j=1}^{G}$ from the current policy:
$o_{ij}\sim\pi_\theta(\cdot\mid V_i)$.
The predicted label $\hat{y}_{ij}$ is parsed from response $o_{ij}$.
Its label correctness is defined as
$\mathbb{I}[\hat{y}_{ij}=y_i]\in\{0,1\}$.
Traditional label-level RLVR uses the following reward:
\begin{equation}
R^{(i,j)}
=
\lambda_{\mathrm{format}} R_{\mathrm{format}}^{(i,j)}
+
\left(1-\lambda_{\mathrm{format}}\right)
R_{\mathrm{label}}^{(i,j)} .
\label{eq:traditional_rlvr_reward}
\end{equation}
Although this reward encourages models to follow the required output format and predict correct labels, it cannot distinguish whether detectors genuinely identify visual forgery artifacts or obtain rewards by exploiting unreliable shortcuts.
To address this limitation, we introduce the concept of \textbf{meta-detection}, which extracts the predicted label $\hat{y}_{ij}$ and decision evidence $\hat{e}_{ij}$ from response $o_{ij}$, aiming to jointly supervise binary label prediction and evidence trustworthiness for more reliable reinforcement learning.

\subsection{Drawbacks of Model-Based Textual Rationales}
Textual explanations offer semantically rich descriptions of visual forgery artifacts, including geometric deformation, temporal inconsistency, and physical violation. However, these explanations are typically generated by powerful reference models, making the resulting supervision reflect model-dependent interpretations of forgery traces rather than objective ground truth and potentially inheriting the biases and limitations of the reference models. Furthermore, evaluating generated textual explanations typically requires an additional auxiliary model to measure their semantic consistency with reference explanations produced by the reference model. This verification process not only introduces additional sources of model-induced bias but also incurs substantial computational overhead. Consequently, model-based textual explanations remain vulnerable to subjective biases during both generation and evaluation, potentially leading to reward hacking.

\begin{figure*}[t]
    \centering
    \includegraphics[width=\linewidth]{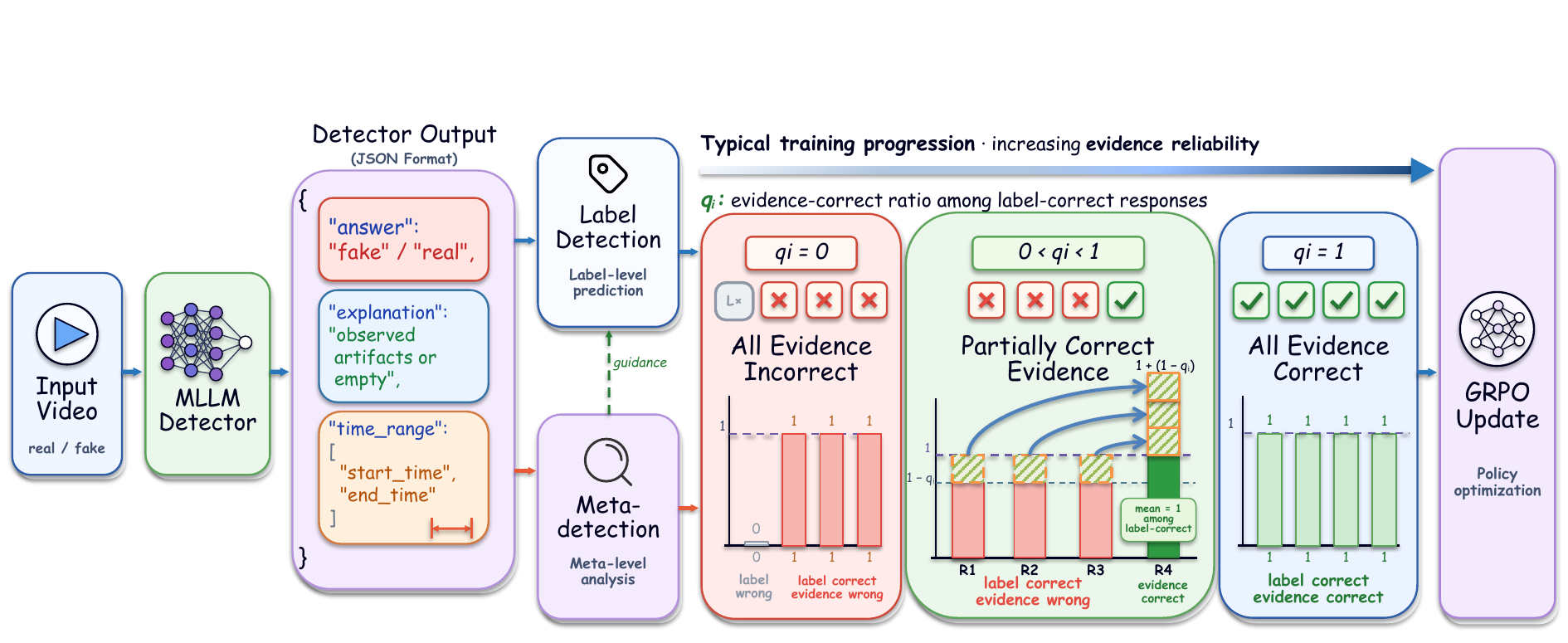}
    \vspace{-3mm}
    \caption{Overview of the Evidence-Guided Reward Redistribution (EGRR) pipeline. The detector takes videos as input and generates JSON-formatted responses containing both label-level and meta-detection signals. The meta-detection signal redistributes rewards among label-correct responses according to evidence quality, which are then used for GRPO policy optimization. The training process evolves from incorrect labels with invalid evidence, to correct labels with mixed evidence quality, and finally to jointly correct labels and reliable evidence.}
    \label{fig3}
    \vspace{-1mm}
\end{figure*}

\subsection{Trustworthy Rule-Based Temporal Grounding and Automated Data Construction Pipeline}
In contrast, temporal grounding provides a more objective and verifiable supervision signal, as the manipulated temporal intervals can be explicitly determined through a controlled forgery construction process. Based on this observation, we propose an automated and scalable data construction pipeline that automatically obtains ground-truth forgery intervals during the construction process.
As shown in Fig. \ref{fig2}, we collect real videos from InternVid and ActivityNet and uniformly process them into 5-second clips. For each real video, we randomly remove a temporal segment and reconstruct the missing content using boundary frames and video generation models. The generated segments are then aligned with the removed segments in terms of duration, resolution, and frame rate before being inserted back into the original temporal locations, producing the corresponding fake videos.
To compare temporal grounding and textual explanations as evidence sources for meta-detection, we employ GPT-5.5 to generate textual descriptions of observable forgery artifacts for each fake video, including geometric deformation, temporal inconsistency, and physical violation. We further use Gemini-3.1-Pro to filter out incorrectly generated explanations.
The quality of temporal evidence is quantified by computing the Intersection-over-Union (IoU) between the predicted temporal interval and the reference interval. Specifically, the predicted and reference intervals are represented as $\hat{t}_{ij}=(\hat{s}_{ij},\hat{e}_{ij})$ and $t_i^{*}=(s_i^{*},e_i^{*})$, respectively, where $s$ and $e$ denote the start and end timestamps.
\begin{equation}
Q_{\mathrm{tem}}
=
\mathrm{IoU}(\hat{t}_{ij},t_i^{*})
=
\frac{
\left|
[\hat{s}_{ij},\hat{e}_{ij}]
\cap
[s_i^{*},e_i^{*}]
\right|
}{
\left|
[\hat{s}_{ij},\hat{e}_{ij}]
\cup
[s_i^{*},e_i^{*}]
\right|
}.
\label{eq:temporal_iou}
\end{equation}
To enable a fair comparison with the binary semantic consistency scores used for textual explanations, we further adopt an IoU threshold of 0.7 to determine whether the temporal evidence is considered correct.

Our automated data construction pipeline provides the following advantages:

\begin{enumerate}

\item Our pipeline provides two types of meta-detection evidence, including manipulated temporal intervals and semantic descriptions, while supporting fully automated and scalable data generation.

\item Temporal evidence is automatically derived from the rule-based construction process, providing reliable and auditable feedback signals that substantially reduce the risk of reward hacking.

\item Unlike fake videos independently generated from real videos, our paired real-fake videos share identical source content and temporal contexts, minimizing semantic shortcuts caused by content-level discrepancies.

\item Unlike approaches that generate entire fake videos, our method introduces localized and controllable manipulations within authentic videos, making the constructed forgeries better reflect complex real-world video forgery scenarios.

\end{enumerate}

\section{Evidence-guided Reward Redistribution}
In this section, we first motivate the introduction of Evidence-Guided Reward Redistribution (EGRR). We then present the detailed algorithm of EGRR and finally provide theoretical analysis demonstrating that EGRR preserves the original label-level learning objective.

\subsection{Beyond Label-Level Feedback}
Label-level RLVR provides feedback to different responses based solely on prediction correctness. Although such binary rewards are accurate, they are overly coarse-grained and provide insufficient guidance to distinguish whether detectors genuinely identify visual forgery artifacts or exploit superficial shortcuts. This limitation makes models vulnerable to reward hacking and restricts their generalization ability on out-of-distribution data.

Meta-detection addresses this limitation by introducing evidence-aware feedback signals that assess the reliability of label-level rewards. Specifically, evidence quality provides an additional dimension to distinguish responses with identical label correctness but different levels of supporting evidence. This naturally motivates us to refine the rewards of label-correct responses according to their evidence quality, transferring reward credits from responses with unreliable evidence to those supported by stronger evidence.

To preserve the original label-level optimization objective, we impose two principles during reward redistribution: (1) responses with incorrect labels remain unchanged, as they should not receive additional evidence-based credit; and (2) the total reward among label-correct responses is preserved. Under these constraints, evidence quality only determines the relative allocation of rewards within the label-correct subset, introducing evidence-based preference while maintaining the original label supervision.

\subsection{Group-Relative Reward Calibration}
Based on the above analysis, we propose Evidence-Guided Reward Redistribution (EGRR), which reallocates rewards among label-correct responses according to their evidence quality while preserving the average reward of correctly classified samples. The objective of EGRR is to refine the original label-level learning signal with evidence-aware preferences without altering the underlying label optimization objective.

Since forgery evidence is only meaningful for fake samples, we focus reward redistribution on responses sampled from videos with $y_i=\mathrm{fake}$. For each response $o_{ij}$, we define the label correctness indicator as $\ell_{ij}=\mathbb{I}[\hat{y}_{ij}=y_i]\in\{0,1\}$, and define the evidence correctness indicator as $e_{ij}\in\{0,1\}$. Specifically, $e_{ij}=1$ indicates that the generated evidence is sufficiently reliable and provides valid support for the predicted label.

We define $q_i$ as the average evidence quality among label-correct responses within the sampled group for the $i$-th fake video:
\begin{equation}
q_i =
\begin{cases}
\displaystyle
\frac{\sum_{j=1}^{G}\ell_{ij}e_{ij}}
{\sum_{j=1}^{G}\ell_{ij}},
& \text{if } \sum_{j=1}^{G}\ell_{ij}>0,\\[10pt]
0,
& \text{otherwise}.
\end{cases}
\end{equation}
The evidence-guided reward adjustment $\ell_{ij}(e_{ij}-q_i)$ measures the relative evidence quality of the $j$-th response compared with the average evidence quality among all label-correct responses. Specifically, a positive value indicates that the response provides above-average evidence, while a negative value indicates inferior evidence quality.
Therefore, EGRR redistributes rewards from responses with below-average evidence quality to those with stronger evidence while preserving the original label-level learning signal. Responses with evidence quality higher than the group average receive additional rewards, whereas responses with inferior evidence are penalized.
The final reward is formulated as:
\begin{equation}
\begin{aligned}
R^{(i,j)}
=&\;
\lambda_{\mathrm{format}}
R_{\mathrm{format}}^{(i,j)}
+
\left(1-\lambda_{\mathrm{format}}\right)
\cdot
\ell_{ij}
\left(
1+
\mathbb{I}[y_i=\mathrm{fake}]
(e_{ij}-q_i)
\right).
\end{aligned}
\label{eq:egrr_reward}
\end{equation}

\begin{figure*}[t]
    \centering
    \includegraphics[width=\linewidth]{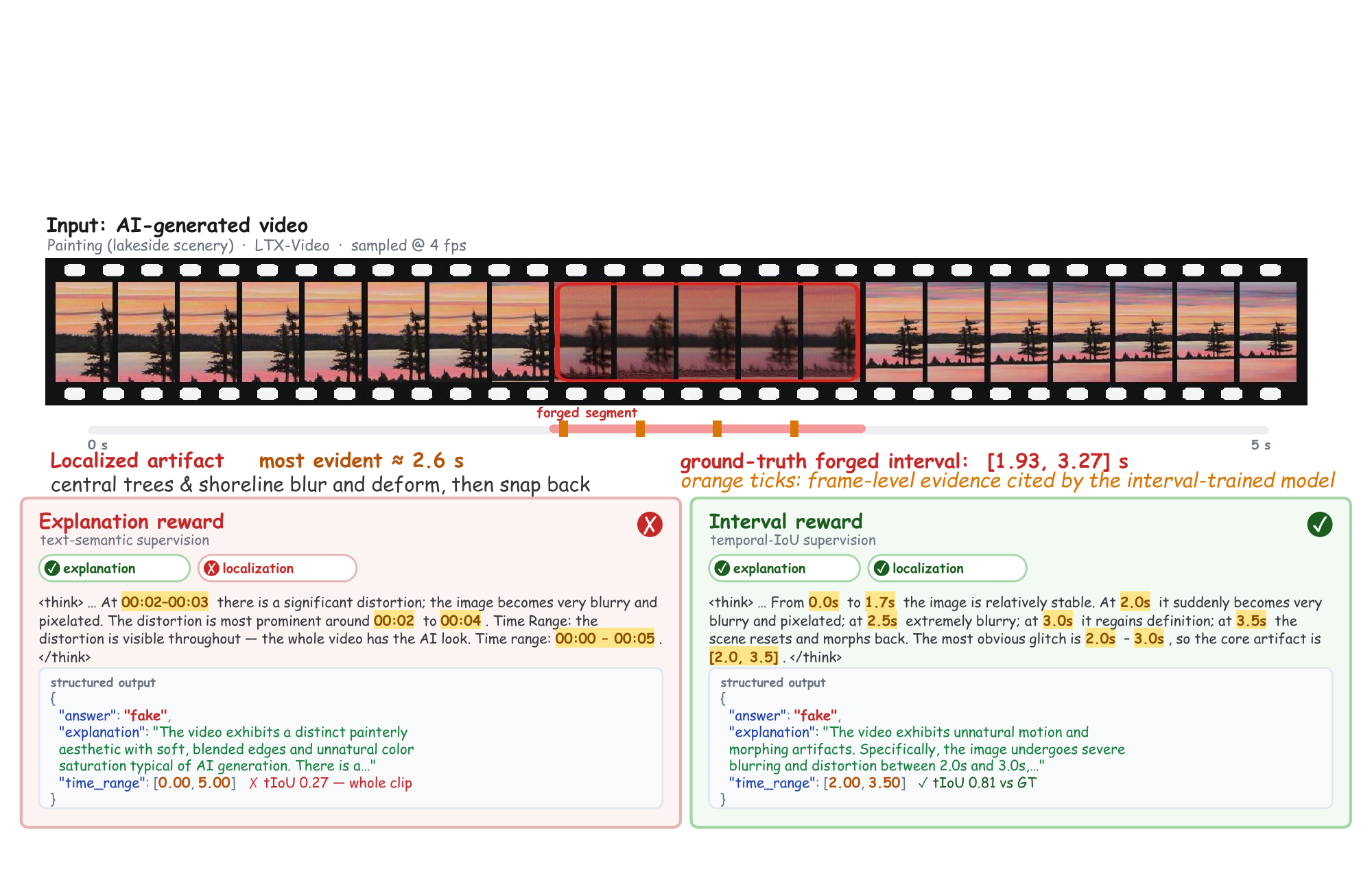}
    \vspace{-3mm}
    \caption{Comparison of detector outputs trained with two types of meta-detection feedback signals. The left and right columns show results from textual explanation-based and temporal grounding-based training, respectively. The detector trained with textual explanations fails to generalize to temporal localization, whereas the detector trained with temporal grounding produces both accurate temporal localization and reasonable textual explanations.}
    \label{fig4}
    \vspace{-1mm}
\end{figure*}

\subsection{Label-Preserving Evidence Calibration}
Let $\mathcal{C}_i=\{j\mid \ell_{ij}=1\}$ denote the set of label-correct
responses for video $V_i$, and let $M_i=|\mathcal{C}_i|$. For $M_i>0$,
$q_i$ is the empirical mean of evidence quality within $\mathcal{C}_i$.
The centered evidence residual therefore satisfies the following exact
finite-sample identity:
\begin{equation}
\sum_{j=1}^{G}\ell_{ij}(e_{ij}-q_i)
=
\sum_{j\in\mathcal{C}_i}(e_{ij}-q_i)
=
0.
\label{eq:zero_sum_evidence}
\end{equation}
To make this property explicit, we denote the semantic component of the EGRR
reward as
$r_{ij}^{\mathrm{sem}}
=
\ell_{ij}\bigl(1+\mathbb{I}[y_i=\mathrm{fake}](e_{ij}-q_i)\bigr)$.
Its average over label-correct responses remains unchanged:
\begin{equation}
\frac{1}{M_i}
\sum_{j\in\mathcal{C}_i}
r_{ij}^{\mathrm{sem}}
=
1+
\frac{\mathbb{I}[y_i=\mathrm{fake}]}{M_i}
\sum_{j\in\mathcal{C}_i}(e_{ij}-q_i)
=
1.
\label{eq:mean_reward_preservation}
\end{equation}
EGRR thus preserves the original semantic reward mass and only modifies its
allocation within the label-correct subset. Moreover, for any two
label-correct responses $j,k\in\mathcal{C}_i$ from the same fake video,
$r_{ij}^{\mathrm{sem}}-r_{ik}^{\mathrm{sem}}=e_{ij}-e_{ik}$.
The redistributed reward therefore preserves the ordering induced by evidence
quality. Stronger evidence receives higher credit without changing the average
label-level reward.
This conservation property also handles the asymmetric availability of
forgery evidence between real and fake videos. For real videos, the evidence
term is removed by $\mathbb{I}[y_i=\mathrm{fake}]$, so EGRR exactly retains the
original label reward. For fake videos, evidence quality changes only the
relative ranking within $\mathcal{C}_i$, while the average semantic reward
remains $1$. Hence, the availability of forgery evidence does not introduce a
systematic shift in the reward scale between real and fake samples. EGRR also
adapts to the informativeness of the sampled evidence. When
$\operatorname{Var}_{j\in\mathcal{C}_i}(e_{ij})=0$, all label-correct responses
satisfy $e_{ij}=q_i$, and every evidence residual becomes zero. EGRR then
reduces exactly to label-level RLVR. This includes the binary cases where all
evidence is incorrect or all evidence is correct. Therefore, EGRR introduces
evidence-aware refinement only when evidence quality provides additional
discriminative information.

\section{Experiments}
\subsection{Experiment Setup}

\begin{table*}[t]
\centering

\caption{
Performance comparison on ViF-Bench with Accuracy, Recall, and F1 score.
}

\footnotesize
\setlength{\tabcolsep}{1.7pt}
\renewcommand{\arraystretch}{1.06}

\begin{adjustbox}{max width=\textwidth,center}
\begin{tabular}{@{}ll*{20}{c}@{}}

\toprule

\multirow{2}{*}{\textbf{Method}}
&
\multirow{2}{*}{\textbf{Metric}}
&
\multicolumn{1}{c}{\textbf{\shortstack{Wan2.1\\-1.3B}}}
&
\multicolumn{1}{c}{\textbf{\shortstack{CogV\\-X1.5}}}
&
\multicolumn{2}{c}{\textbf{\shortstack{Wan2.2\\-5B}}}
&
\multicolumn{2}{c}{\textbf{\shortstack{Hunyuan\\Video}}}
&
\multicolumn{1}{c}{\textbf{\shortstack{VACE\\-1.3B}}}
&
\multicolumn{2}{c}{\textbf{\shortstack{Wan2.2\\-14B}}}
&
\multicolumn{2}{c}{\textbf{\shortstack{SkyReels\\-V2}}}
&
\multicolumn{2}{c}{\textbf{\shortstack{LTX-Video\\-13B}}}
&
\multicolumn{1}{c}{\textbf{\shortstack{Gen4\\-Turbo}}}
&
\multicolumn{1}{c}{\textbf{\shortstack{Hailuo\\-02}}}
&
\multicolumn{1}{c}{\textbf{\shortstack{Pika\\-V2}}}
&
\multicolumn{1}{c}{\textbf{\shortstack{PixVerse\\-V4.5}}}
&
\multicolumn{1}{c}{\textbf{\shortstack{Kling\\-V1}}}
&
\multicolumn{1}{c}{\textbf{Sora-2}}
&
\multirow{2}{*}{\textbf{Avg.}}
\\

\cmidrule(lr){3-21}

&
&
\multicolumn{2}{c}{\textbf{T2V}}
&
\textbf{T2V}
&
\textbf{I2V}
&
\textbf{T2V}
&
\textbf{I2V}
&
\textbf{T2V}
&
\textbf{T2V}
&
\textbf{I2V}
&
\textbf{T2V}
&
\textbf{I2V}
&
\textbf{T2V}
&
\textbf{I2V}
&
\multicolumn{6}{c}{\textbf{T2V}}
&
\\

\midrule


\multirow{3}{*}{Qwen3.7-Plus}
& Acc
& 61.04 & 67.48 & 61.04 & 53.37 & 57.72 & 52.45 & 51.23
& 56.13 & 52.76 & 58.28 & 53.09 & 57.50 & 53.07 & 54.95
& 54.41 & 67.33 & 77.33 & 55.00 & 55.74 & 57.89
\\
& R
& 27.61 & 40.49 & 27.61 & 12.27 & 20.99 & 10.43 & 7.98
& 17.79 & 11.04 & 22.09 & 11.73 & 20.00 & 11.66 & 17.12
& 14.71 & 40.00 & 60.67 & 15.71 & 17.57 & 21.44
\\
& F1
& 41.47 & 55.46 & 41.47 & 20.83 & 33.17 & 17.99 & 14.05
& 28.86 & 18.95 & 34.62 & 20.00 & 32.00 & 19.90 & 27.54
& 24.39 & 55.05 & 72.80 & 25.88 & 28.42 & 32.25
\\

\midrule
\multirow{3}{*}{GPT-5.5}
& Acc
& 56.13 & 58.28 & 54.29 & 51.53 & 57.72 & 50.92 & 50.31
& 57.06 & 50.61 & 56.44 & 51.23 & 58.75 & 52.45 & 50.90
& 53.31 & 79.33 & 78.67 & 51.79 & 50.68 & 56.34
\\
& R
& 13.50 & 17.79 & 9.82 & 4.29 & 16.67 & 3.07 & 1.84
& 15.34 & 2.45 & 14.11 & 3.70 & 18.75 & 6.13 & 3.60
& 8.09 & 59.33 & 58.67 & 5.00 & 2.70 & 13.94
\\
& F1
& 23.53 & 29.90 & 17.68 & 8.14 & 28.27 & 5.88 & 3.57
& 26.32 & 4.73 & 24.47 & 7.06 & 31.25 & 11.43 & 6.84
& 14.77 & 74.17 & 73.33 & 9.40 & 5.19 & 21.36
\\

\midrule
\midrule

\multirow{3}{*}{DeepTraceReward}
& Acc
& 52.45 & 49.69 & 52.45 & 51.23 & 52.78 & 48.77 & 52.45
& 51.84 & 50.31 & 51.84 & 51.23 & 52.19 & 50.61 & 51.35
& 52.57 & 48.67 & 51.67 & 51.07 & 51.35 & 51.29
\\
& R
& 98.77 & 93.25 & 98.77 & 96.32 & 99.38 & 91.41 & 98.77
& 97.55 & 94.48 & 97.55 & 96.30 & 98.12 & 95.09 & 95.50
& 97.79 & 90.67 & 97.33 & 95.00 & 96.62 & 96.25
\\
& F1
& 67.51 & 64.96 & 67.51 & 66.38 & 67.79 & 64.09 & 67.51
& 66.95 & 65.53 & 66.95 & 66.38 & 67.24 & 65.82 & 66.25
& 67.34 & 63.85 & 66.82 & 66.00 & 66.51 & 66.39
\\

\midrule

\multirow{3}{*}{BusterX++}
& Acc
& 58.28 & 61.96 & 55.52 & 51.84 & 59.26 & 50.61 & 58.28
& 56.13 & 48.16 & 61.04 & 55.25 & 68.12 & 54.60 & 55.86
& 56.99 & 62.33 & 65.33 & 56.43 & 52.36 & 57.28
\\
& R
& 31.29 & 38.65 & 25.77 & 18.40 & 33.33 & 15.95 & 31.29
& 26.99 & 11.04 & 36.81 & 25.31 & 51.25 & 23.93 & 27.93
& 28.68 & 39.33 & 45.33 & 28.57 & 20.95 & 29.52
\\
& F1
& 42.86 & 50.40 & 36.68 & 27.65 & 45.00 & 24.41 & 42.86
& 38.10 & 17.56 & 48.58 & 36.12 & 61.65 & 34.51 & 38.75
& 40.00 & 51.08 & 56.67 & 39.60 & 30.54 & 40.16
\\

\midrule
\midrule


\multirow{3}{*}{Qwen3.5-9B}
& Acc
& 49.70 & 44.24 & 46.06 & 45.15 & 45.12 & 43.33 & 45.76
& 45.15 & 44.85 & 46.06 & 42.38 & 51.56 & 42.94 & 43.30
& 45.62 & 51.99 & 54.93 & 45.04 & 42.67 & 46.10
\\
& R
& 9.70 & 6.67 & 7.88 & 1.82 & 6.10 & 0.61 & 4.24
& 6.06 & 3.03 & 9.09 & 3.05 & 13.75 & 2.45 & 0.89
& 5.11 & 17.22 & 23.68 & 3.55 & 0.67 & 6.61
\\
& F1
& 16.17 & 10.68 & 12.75 & 3.21 & 10.00 & 1.06 & 7.25
& 9.95 & 5.21 & 14.42 & 5.03 & 22.11 & 4.12 & 1.55
& 8.59 & 26.40 & 34.44 & 6.07 & 1.16 & 10.54
\\
\midrule
\multirow{3}{*}{\quad + Label-RL}
& Acc
& 76.44 & 76.98 & 70.45 & 71.07 & 69.33 & 68.57 & 73.50 & 70.68 & 70.09 & 73.66 & 74.28 & 71.05 & 71.69 & 72.30 & 74.66 & 75.82 & 75.14 & 72.87 & 69.49 & 72.53
\\
& R
& 65.98 & 67.07 & 64.22 & 65.18 & 61.36 & 60.78 & 64.16 & 59.52 & 60.36 & 61.95 & 62.81 & 63.33 & 62.48 & 63.63 & 61.84 & 67.87 & 66.54 & 65.29 & 60.61 & 63.42
\\
& F1
& 73.69 & 74.45 & 68.49 & 69.26 & 66.67 & 65.91 & 70.77
& 67.00 & 66.87 & 70.17 & 70.95 & 68.63 & 68.82 & 69.67
& 70.93 & 73.73 & 72.80 & 70.64 & 66.52 & 69.79
\\
\midrule
\multirow{3}{*}{\quad + Label-Exp-RL}
& Acc
& 79.74 & 78.90 & 77.36 & 77.98 & 75.51 & 76.24 & 76.94 & 74.88 & 74.34 & 76.66 & 75.69 & 78.50 & 77.85 & 73.92 & 75.05 & 80.54 & 79.32 & 72.30 & 73.30 & 76.58
\\
& R
& 69.01 & 70.36 & 66.39 & 65.71 & 68.31 & 69.03 & 66.63 & 63.94 & 63.34 & 66.96 & 67.82 & 65.87 & 65.03 & 65.07 & 67.93 & 70.55 & 69.70 & 64.23 & 63.02 & 66.78
\\
& F1
& 77.30 & 76.93 & 74.57 & 74.90 & 73.61 & 74.39 & 74.29
& 71.79 & 71.17 & 74.15 & 73.61 & 75.39 & 74.59 & 71.39
& 73.14 & 78.38 & 77.12 & 69.87 & 70.24 & 74.04
\\
\midrule

\rowcolor{oursgray}
& Acc
& 86.86 & 86.19 & 81.44 & 81.37 & 83.51 & 82.78 & 81.97 & 81.37 & 81.51 & 83.40 & 84.13 & 81.37 & 81.75 & 84.48 & 82.51 & 81.34 & 84.98 & 81.37 & 81.44 & 82.83
\\
\rowcolor{oursgray}
& R
& 76.53 & 75.98 & 74.54 & 73.88 & 72.94 & 72.00 & 73.66 & 71.25 & 71.30 & 71.43 & 72.27 & 74.31 & 75.01 & 72.96 & 71.19 & 71.10 & 77.15 & 71.20 & 71.25 & 73.16
\\
\rowcolor{oursgray}
\multirow{-3}{*}{\textbf{\quad + Label-Tem-RL}}
& F1
& 85.35 & 84.62 & 80.06 & 79.86 & 81.56 & 80.70 & 80.34
& 79.27 & 79.41 & 81.14 & 81.99 & 79.95 & 80.43 & 82.46
& 80.28 & 79.21 & 83.70 & 79.26 & 79.33 & 81.00
\\

\bottomrule
\end{tabular}
\end{adjustbox}
\vspace{-0.3cm}

\label{tab1}
\end{table*}

\paragraph{Implement Details.} 
We use Qwen3.5-9B \cite{qwen3.5} as the base detector and uniformly sample video frames at 4 FPS. To construct diverse fake videos, we employ open-source video generation models with different scales, including LTX-Video-2B \cite{HaCohen2024LTXVideo}, Wan2.2-Fun-5B-InP \cite{wan2025}, and SkyReels-V2-DF-14B-540P-Diffusers \cite{chen2025skyreelsv2infinitelengthfilmgenerative}, to generate 45K fake videos. In addition, we incorporate 5K high-fidelity challenging fake videos generated by closed-source models, including Wan2.7-I2V \cite{alibabacloud2026wan27i2v} and Seedance1.0-Pro \cite{byteplus2025seedance10pro}. Together with the corresponding real videos, this process results in a balanced dataset containing 100K samples.
For textual evidence construction, we leverage GPT-5.5 \cite{openai2026gpt55} to generate descriptions of observable forgery artifacts for each fake video. We further use Gemini-3.1-Pro \cite{googledeepmind2026gemini31pro} to filter out samples with unreliable textual explanations and remove their corresponding real videos to maintain a 1:1 ratio between real and fake samples. During evaluation, Qwen3.5-4B \cite{qwen3.5} is employed to measure the semantic consistency between detector-generated explanations and reference textual explanations.
To ensure a fair comparison between temporal evidence and textual explanations, we convert temporal grounding quality into a binary feedback signal using an IoU threshold of 0.7, consistent with the binary nature of textual evidence evaluation. We optimize the detector with the DAPO \cite{yu2026dapo} algorithm using a learning rate of $1\times10^{-6}$. All experiments are conducted on 16 NVIDIA H200-144GB GPUs for one training epoch.

\paragraph{Evaluation Metrics and Comparison Methods.} 
We compare our method with strong proprietary MLLMs, including Qwen3.7-Plus and GPT-5.5, as well as representative open-source MLLM-based detectors, including DeepTraceReward and BusterX++. We report Accuracy, F1 score, and Recall on ViF-Bench. Furthermore, we measure Recall on the fake-only subset of GenBuster-Bench to provide a focused evaluation of synthetic video identification capability, which is critical for reducing false negatives in real-world  scenarios.

\begin{table*}[t]
\centering

\caption{
Performance comparison on the fake-only subset of GenBuster-Bench, focusing on the capability of identifying AI-generated videos.
}

\footnotesize
\setlength{\tabcolsep}{3.2pt}
\renewcommand{\arraystretch}{1.06}

\begin{adjustbox}{max width=\textwidth,center}
\begin{tabular}{@{}l*{10}{c}@{}}
\toprule

\multirow{2}{*}{\textbf{Method}}
&
\multicolumn{9}{c}{\textbf{OOD (2025)}}
&
\multirow{2}{*}{\textbf{Wild (2026)}}
\\

\cmidrule(lr){2-10}

&
\textbf{Sora}
&
\textbf{Pika}
&
\textbf{Gen3}
&
\textbf{Luma}
&
\textbf{WanX}
&
\textbf{Kling}
&
\textbf{Jimeng}
&
\textbf{Vidu}
&
\textbf{Avg.}
&
\\

\midrule

Qwen3.7-Plus
& 75.5
& 93.0
& 89.0
& 99.0
& 93.3
& 86.0
& 75.0
& 88.7
& 87.4
& 64.7
\\
\methodspace

GPT-5.5
& 61.0
& 89.0
& 92.0
& 99.0
& 88.7
& 70.0
& 50.0
& 96.7
& 80.8
& 60.7
\\
\midrule

\methodspace

DeepTraceReward
& 83.0
& 90.0
& 89.0
& 91.0
& 85.3
& 86.0
& 79.0
& 93.3
& 87.1
& 90.0
\\

\methodspace

BusterX++
& 71.0
& 92.0
& 83.0
& 90.0
& 81.3
& 63.0
& 78.0
& 87.3
& 80.7
& 64.0
\\

\midrule

Qwen3.5-9B

& 36.0

& 60.0

& 66.0

& 77.0

& 62.7

& 53.0

& 46.0

& 58.7

& 57.4

& 26.7

\\

\quad + Label-RL
& 76.5\gain{40.5} & 90.0\gain{30.0} & 91.0\gain{25.0} & 86.0\gain{9.0} & 86.0\gain{23.3} & 78.0\gain{25.0} & 79.0\gain{33.0} & 80.7\gain{22.0} & 83.4\gain{26.0} & 75.0\gain{48.3}
\\

\methodspace

\quad + Label-Exp-RL
& 78.6\gain{42.6} & 91.6\gain{31.6} & 88.6\gain{22.6} & 93.3\gain{16.3} & 92.6\gain{29.9} & 82.6\gain{29.6} & 85.6\gain{39.6} & 83.9\gain{25.2} & 87.1\gain{29.7} & 82.8\gain{56.1}
\\

\methodspace

\rowcolor{oursgray}

\textbf{\quad + Label-Tem-RL}

& \best{89.7}\gain{53.7} & \best{95.8}\gain{35.8} & \best{93.6}\gain{27.6} & \best{99.5}\gain{22.5} & \best{95.0}\gain{32.3} & \best{90.0}\gain{37.0} & \best{87.2}\gain{41.2} & \best{97.3}\gain{38.6} & \best{93.5}\gain{36.1} & \best{94.2}\gain{67.5}

\\

\bottomrule
\end{tabular}
\end{adjustbox}

\label{tab2}
\end{table*}

\subsection{Main Results}

\paragraph{Analysis of Meta-Detection Effectiveness.}
To validate the effectiveness of meta-detection, we compare detectors trained with and without evidence-aware feedback. As shown in Tables~\ref{tab1} and~\ref{tab2}, as well as Fig.~\ref{fig5}, detectors trained with meta-detection consistently outperform conventional label-level reinforcement learning. For example, on ViF-Bench, Label-Tem-RL improves over Label-RL by 10.30\%, 9.74\%, and 11.21\% in Accuracy, Recall, and F1 score, respectively. On the fake-only subset of GenBuster-Bench, Label-Tem-RL further improves Recall by 10.1\% and 19.2\% on OOD and Wild evaluations, respectively. The radar chart in Fig.~\ref{fig5} further provides an intuitive visualization of these consistent improvements across different evaluation settings. These results demonstrate that evidence-aware reward redistribution among label-correct responses effectively encourages detectors to learn fine-grained forgery artifacts, leading to more robust and generalizable AI-generated video detection.

\begin{wrapfigure}{r}{0.5\textwidth}
\vspace{-4mm}
    \centering
    \includegraphics[width=0.5\textwidth]{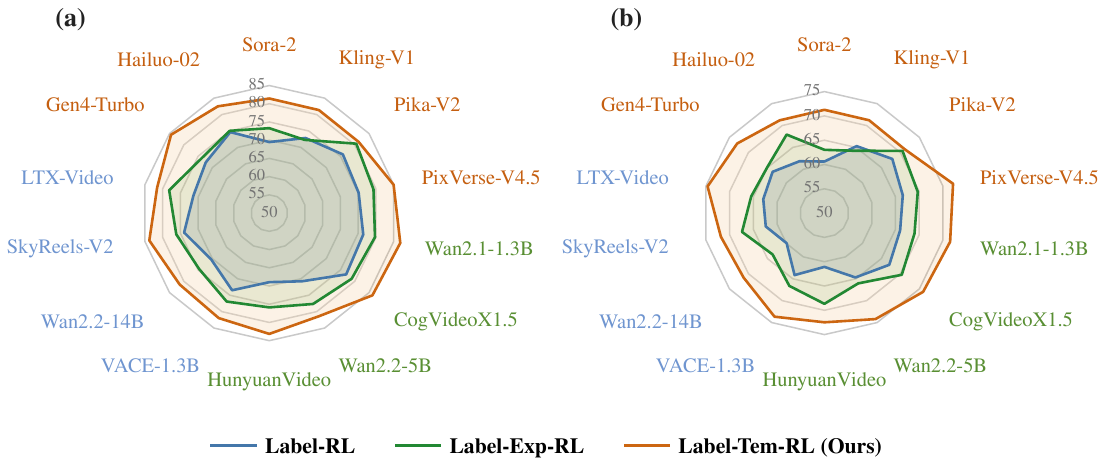}
    \vspace{-5mm}
    \caption{Performance on ViF-Bench. (a) Accuracy comparison. (b) Recall comparison.}
    \label{fig5}
    \vspace{-3mm}
\end{wrapfigure}

\paragraph{Temporal Grounding Outperforms Textual Explanations as Meta-Detection Feedback.}
To ensure a fair comparison between temporal grounding and textual explanations, we convert temporal evidence into binary feedback using an IoU threshold of 0.7, matching the binary nature of textual evidence evaluation. As shown in Tables~\ref{tab1} and~\ref{tab2}, detectors trained with temporal grounding consistently outperform those trained with textual explanations. On ViF-Bench, Label-Tem-RL improves over Label-Exp-RL by 6.25\%, 6.38\%, and 6.96\% in Accuracy, Recall, and F1 score, respectively. On the fake-only subset of GenBuster-Bench, temporal grounding achieves improvements of 6.4\% and 11.4\% on OOD and Wild evaluation, respectively. These results demonstrate that rule-based temporal grounding provides more reliable and generalizable evidence supervision for meta-detection than model-based textual explanations.
As shown in Fig.~\ref{fig4}, the model trained with textual supervision can describe general forgery artifacts but fails to localize the manipulated temporal interval. In contrast, temporal grounding supervision enables accurate time-range prediction while retaining the ability to generate meaningful textual explanations, further demonstrating the superiority of temporal grounding as a reliable evidence source for meta-detection.

\FloatBarrier

\section{Conclusion}
In this paper, we introduce the concept of \textbf{meta-detection} into AI-generated video detection, extending reinforcement learning beyond label correctness to jointly evaluate prediction correctness and evidence validity. We develop an automated and scalable data construction pipeline that generates paired real-fake videos while deriving ground-truth forgery temporal intervals from controlled manipulation processes. We further demonstrate that rule-based temporal grounding provides more reliable and verifiable supervision than model-based textual explanations, making it a more suitable evidence source for meta-detection. Moreover, we propose \textbf{Evidence-Guided Reward Redistribution (EGRR)}, which calibrates label-level rewards by redistributing credits among label-correct responses according to evidence quality, enabling effective integration of meta-detection feedback into reinforcement learning. Extensive experiments demonstrate that \textbf{VidForensics-M1} learns fine-grained visual forgery artifacts from temporal grounding evidence, leading to more robust and generalizable AI-generated video detection.

\bibliographystyle{unsrtnat}
\bibliography{references}

@inproceedings{zhang2026generative,
  title={Generative universal verifier as multimodal meta-reasoner},
  author={Zhang, Xinchen and Zhang, Xiaoying and Wu, Youbin and Cao, Yanbin and Zhang, Renrui and Chu, Ruihang and Yang, Ling and Yang, Yujiu and Shi, Guang},
  booktitle={International Conference on Learning Representations},
  volume={2026},
  pages={109211--109243},
  year={2026}
}

@article{zhang2026omniverifier,
  title={OmniVerifier-M1: Multimodal Meta-Verifier with Explicit Structured Recalibration},
  author={Zhang, Xinchen and Liu, Bowei and Liu, Jiale and Shi, Chufan and Zhang, Yizhen and Liu, Junhong and Zhang, Youliang and Li, Zhiheng and Yang, Yujiu and Yang, Ling},
  journal={arXiv preprint arXiv:2605.28805},
  year={2026}
}

@inproceedings{zhang2025itercomp,
  title={Itercomp: Iterative composition-aware feedback learning from model gallery for text-to-image generation},
  author={Zhang, Xinchen and Yang, Ling and Li, Guohao and Cai, Yaqi and Tang, Yong and Yang, Yujiu and Wang, Mengdi and CUI, Bin and others},
  booktitle={International Conference on Learning Representations},
  volume={2025},
  pages={31968--31988},
  year={2025}
}

@article{zhang2024realcompo,
  title={Realcompo: Balancing realism and compositionality improves text-to-image diffusion models},
  author={Zhang, Xinchen and Yang, Ling and Cai, Yaqi and Yu, Zhaochen and Wang, Kai-Ni and Xie, Jiake and Tian, Ye and Xu, Minkai and Tang, Yong and Yang, Yujiu and others},
  journal={Advances in Neural Information Processing Systems},
  volume={37},
  pages={96963--96992},
  year={2024}
}

@inproceedings{wang2020cnn,
  title={CNN-generated images are surprisingly easy to spot... for now},
  author={Wang, Sheng-Yu and Wang, Oliver and Zhang, Richard and Owens, Andrew and Efros, Alexei A},
  booktitle={Proceedings of the IEEE/CVF conference on computer vision and pattern recognition},
  pages={8695--8704},
  year={2020}
}

@inproceedings{xu2025hunyuanportrait,
  title={Hunyuanportrait: Implicit condition control for enhanced portrait animation},
  author={Xu, Zunnan and Yu, Zhentao and Zhou, Zixiang and Zhou, Jun and Jin, Xiaoyu and Hong, Fa-Ting and Ji, Xiaozhong and Zhu, Junwei and Cai, Chengfei and Tang, Shiyu and others},
  booktitle={2025 IEEE/CVF Conference on Computer Vision and Pattern Recognition (CVPR)},
  pages={15909--15919},
  year={2025},
  organization={IEEE}
}

@inproceedings{zhang2026zo3t,
  title={Zo3t: Zero-shot 3d-aware trajectory-guided image-to-video generation via test-time training},
  author={Zhang, Ruicheng and Zhou, Jun and Xu, Zunnan and Liu, Zihao and Huang, Jiehui and Zhang, Mingyang and Sun, Yu and Li, Xiu},
  booktitle={Proceedings of the AAAI Conference on Artificial Intelligence},
  volume={40},
  number={15},
  pages={12708--12716},
  year={2026}
}

@inproceedings{ojha2023universal,
  title={Towards universal fake image detectors that generalize across generative models},
  author={Ojha, Utkarsh and Li, Yuheng and Lee, Yong Jae},
  booktitle={Proceedings of the IEEE/CVF conference on computer vision and pattern recognition},
  pages={24480--24489},
  year={2023}
}

@inproceedings{hou2026survey,
  title={Detecting AI-Generated Video: A Vision--Language Dual-View Survey},
  author={Hou, Dylan Xinming and Zhang, Juntian and Gu, Xu and Wu, Yichen and Lukas, Nils and Xia, Gus and Chen, Xiuying and Liu, Yuhan},
  booktitle={Findings of the Association for Computational Linguistics: ACL 2026},
  pages={32221--32255},
  year={2026}
}

@article{gao2025davidxr1,
  title={DAVID-XR1: Detecting AI-Generated Videos with Explainable Reasoning},
  author={Gao, Yifeng and Ding, Yifan and Su, Hongyu and Li, Juncheng and Zhao, Yunhan and Luo, Lin and Chen, Zixing and Wang, Li and Wang, Xin and Wang, Yixu and others},
  journal={arXiv preprint arXiv:2506.14827},
  year={2025}
}

@article{fu2025deeptracereward,
  title={Learning Human-Perceived Fakeness in AI-Generated Videos via Multimodal LLMs},
  author={Fu, Xingyu and Liu, Siyi and Xu, Yinuo and Lu, Pan and Hu, Guangqiuse and Yang, Tianbo and Anantasagar, Taran and Shen, Christopher and Mao, Yikai and Liu, Yuanzhe and others},
  journal={arXiv preprint arXiv:2509.22646},
  year={2025}
}

@article{li2026skyra,
  title={Skyra: AI-Generated Video Detection via Grounded Artifact Reasoning},
  author={Li, Yifei and Zheng, Wenzhao and Zhang, Yanran and Sun, Runze and Zheng, Yu and Chen, Lei and Zhou, Jie and Lu, Jiwen},
  journal={arXiv preprint arXiv:2512.15693},
  year={2025}
}

@article{park2026vidguardr,
  title={Vidguard-r1: Ai-generated video detection and explanation via reasoning mllms and rl},
  author={Park, Kyoungjun and Yang, Yifan and Yi, Juheon and Zheng, Shicheng and Shen, Yifei and Han, Dongqi and Shan, Caihua and Muaz, Muhammad and Qiu, Lili},
  journal={arXiv preprint arXiv:2510.02282},
  year={2025}
}

@article{wen2025busterx,
  title={Busterx: Mllm-powered ai-generated video forgery detection and explanation},
  author={Wen, Haiquan and He, Yiwei and Huang, Zhenglin and Li, Tianxiao and Yu, Zihan and Huang, Xingru and Qi, Lu and Wu, Baoyuan and Li, Xiangtai and Cheng, Guangliang},
  journal={arXiv preprint arXiv:2505.12620},
  year={2025}
}

@article{wen2025busterxpp,
  title={Busterx++: Towards unified cross-modal ai-generated content detection and explanation with mllm},
  author={Wen, Haiquan and Li, Tianxiao and Huang, Zhenglin and He, Yiwei and Cheng, Guangliang},
  journal={arXiv preprint arXiv:2507.14632},
  year={2025}
}

@inproceedings{tan2026videoveritas,
  title={Videoveritas: Ai-generated video detection via perception pretext reinforcement learning},
  author={Tan, Hao and Shi, Senyuan and Tan, Zichang and Yu, Zijian and Zhu, Huijia and Wang, Weiqiang and Wan, Jun and Lei, Zhen and others},
  booktitle={Forty-third International Conference on Machine Learning},
  year={2026}
}

@inproceedings{bai2024gvd,
  title={Ai-generated video detection via spatial-temporal anomaly learning},
  author={Bai, Jianfa and Lin, Man and Cao, Gang and Lou, Zijie},
  booktitle={Chinese Conference on Pattern Recognition and Computer Vision (PRCV)},
  pages={460--470},
  year={2024},
  organization={Springer}
}

@article{chen2026genvideo,
  title={Demamba: Ai-generated video detection on million-scale genvideo benchmark},
  author={Chen, Haoxing and Hong, Yan and Huang, Zizheng and Xu, Zhuoer and Gu, Zhangxuan and Li, Yaohui and Lan, Jun and Zhu, Huijia and Zhang, Jianfu and Wang, Weiqiang and others},
  journal={arXiv preprint arXiv:2405.19707},
  year={2024}
}

@inproceedings{ma2025gvf,
  title={Detecting ai-generated video via frame consistency},
  author={Ma, Long and Yan, Zhiyuan and Guo, Qinglang and Liao, Yong and Yu, Haiyang and Zhou, Pengyuan},
  booktitle={2025 IEEE International Conference on Multimedia and Expo (ICME)},
  pages={1--6},
  year={2025},
  organization={IEEE}
}

@inproceedings{ni2026genvidbench,
  title={Genvidbench: A 6-million benchmark for ai-generated video detection},
  author={Ni, Zhenliang and Yan, Qiangyu and Huang, Mouxiao and Yuan, Tianning and Tang, Yehui and Hu, Hailin and Chen, Xinghao and Wang, Yunhe},
  booktitle={Proceedings of the AAAI Conference on Artificial Intelligence},
  volume={40},
  number={18},
  pages={15582--15590},
  year={2026}
}

@misc{qwen3.5,
    title  = {{Qwen3.5}: Towards Native Multimodal Agents},
    author = {{Qwen Team}},
    month  = {February},
    year   = {2026},
    howpublished = {\url{https://qwen.ai/blog?id=qwen3.5}}
}

@article{HaCohen2024LTXVideo,
  title={LTX-Video: Realtime Video Latent Diffusion},
  author={HaCohen, Yoav and Chiprut, Nisan and Brazowski, Benny and Shalem, Daniel and Moshe, Dudu and Richardson, Eitan and Levin, Eran and Shiran, Guy and Zabari, Nir and Gordon, Ori and Panet, Poriya and Weissbuch, Sapir and Kulikov, Victor and Bitterman, Yaki and Melumian, Zeev and Bibi, Ofir},
  journal={arXiv preprint arXiv:2501.00103},
  year={2024}
}

@article{wan2025,
      title={Wan: Open and Advanced Large-Scale Video Generative Models}, 
      author={Team Wan and Ang Wang and Baole Ai and Bin Wen and Chaojie Mao and Chen-Wei Xie and Di Chen and Feiwu Yu and Haiming Zhao and Jianxiao Yang and Jianyuan Zeng and Jiayu Wang and Jingfeng Zhang and Jingren Zhou and Jinkai Wang and Jixuan Chen and Kai Zhu and Kang Zhao and Keyu Yan and Lianghua Huang and Mengyang Feng and Ningyi Zhang and Pandeng Li and Pingyu Wu and Ruihang Chu and Ruili Feng and Shiwei Zhang and Siyang Sun and Tao Fang and Tianxing Wang and Tianyi Gui and Tingyu Weng and Tong Shen and Wei Lin and Wei Wang and Wei Wang and Wenmeng Zhou and Wente Wang and Wenting Shen and Wenyuan Yu and Xianzhong Shi and Xiaoming Huang and Xin Xu and Yan Kou and Yangyu Lv and Yifei Li and Yijing Liu and Yiming Wang and Yingya Zhang and Yitong Huang and Yong Li and You Wu and Yu Liu and Yulin Pan and Yun Zheng and Yuntao Hong and Yupeng Shi and Yutong Feng and Zeyinzi Jiang and Zhen Han and Zhi-Fan Wu and Ziyu Liu},
      journal = {arXiv preprint arXiv:2503.20314},
      year={2025}
}

@misc{chen2025skyreelsv2infinitelengthfilmgenerative,
      title={SkyReels-V2: Infinite-length Film Generative Model}, 
      author={Guibin Chen and Dixuan Lin and Jiangping Yang and Chunze Lin and Junchen Zhu and Mingyuan Fan and Hao Zhang and Sheng Chen and Zheng Chen and Chengcheng Ma and Weiming Xiong and Wei Wang and Nuo Pang and Kang Kang and Zhiheng Xu and Yuzhe Jin and Yupeng Liang and Yubing Song and Peng Zhao and Boyuan Xu and Di Qiu and Debang Li and Zhengcong Fei and Yang Li and Yahui Zhou},
      year={2025},
      eprint={2504.13074},
      archivePrefix={arXiv},
      primaryClass={cs.CV},
      url={https://arxiv.org/abs/2504.13074}, 
}

@misc{alibabacloud2026wan27i2v,
  author       = {{Alibaba Cloud}},
  title        = {{Wan 2.7}: Image-to-Video API},
  year         = {2026},
  howpublished = {\url{https://www.alibabacloud.com/help/en/model-studio/image-to-video-general-api-reference}}
}

@misc{byteplus2025seedance10pro,
  author       = {{BytePlus}},
  title        = {{Seedance 1.0 Pro}},
  year         = {2025},
  howpublished = {\url{https://docs.byteplus.com/en/docs/ModelArk/1587798}},
}

@misc{openai2026gpt55,
    title        = {Introducing GPT-5},
    author       = {{OpenAI}},
    year         = {2025},
    howpublished = {\url{https://openai.com/index/introducing-gpt-5/}}
}

@misc{googledeepmind2026gemini31pro,
  author      = {{Google DeepMind}},
  title       = {{Gemini 3.1 Pro} Model Card},
  year        = {2026},
  howpublished = {\url{https://deepmind.google/models/model-cards/gemini-3-1-pro/}},
}

@article{yu2026dapo,
  title={Dapo: An open-source llm reinforcement learning system at scale},
  author={Yu, Qiying and Zhang, Zheng and Zhu, Ruofei and Yuan, Yufeng and Zuo, Xiaochen and Yue, Yu and Dai, Weinan and Fan, Tiantian and Liu, Gaohong and Liu, Lingjun and others},
  journal={Advances in Neural Information Processing Systems},
  volume={38},
  pages={113222--113244},
  year={2026}
}

@inproceedings{yan2024transcending,
  title={Transcending forgery specificity with latent space augmentation for generalizable deepfake detection},
  author={Yan, Zhiyuan and Luo, Yuhao and Lyu, Siwei and Liu, Qingshan and Wu, Baoyuan},
  booktitle={Proceedings of the IEEE/CVF Conference on Computer Vision and Pattern Recognition},
  pages={8984--8994},
  year={2024}
}

@inproceedings{tan2024rethinking,
  title={Rethinking the up-sampling operations in cnn-based generative network for generalizable deepfake detection},
  author={Tan, Chuangchuang and Zhao, Yao and Wei, Shikui and Gu, Guanghua and Liu, Ping and Wei, Yunchao},
  booktitle={Proceedings of the IEEE/CVF conference on computer vision and pattern recognition},
  pages={28130--28139},
  year={2024}
}

@inproceedings{nguyen2024laa,
  title={Laa-net: Localized artifact attention network for quality-agnostic and generalizable deepfake detection},
  author={Nguyen, Dat and Mejri, Nesryne and Singh, Inder Pal and Kuleshova, Polina and Astrid, Marcella and Kacem, Anis and Ghorbel, Enjie and Aouada, Djamila},
  booktitle={Proceedings of the IEEE/CVF Conference on Computer Vision and Pattern Recognition},
  pages={17395--17405},
  year={2024}
}

@inproceedings{fu2025exploring,
  title={Exploring unbiased deepfake detection via token-level shuffling and mixing},
  author={Fu, Xinghe and Yan, Zhiyuan and Yao, Taiping and Chen, Shen and Li, Xi},
  booktitle={Proceedings of the AAAI Conference on Artificial Intelligence},
  volume={39},
  number={3},
  pages={3040--3048},
  year={2025}
}

@inproceedings{yan2025sanity,
  title={A sanity check for ai-generated image detection},
  author={Yan, Shilin and Li, Ouxiang and Cai, Jiayin and Hao, Yanbin and Jiang, Xiaolong and Hu, Yao and Xie, Weidi},
  booktitle={International Conference on Learning Representations},
  volume={2025},
  pages={70702--70720},
  year={2025}
}

@article{yang2025all,
  title={All patches matter, more patches better: Enhance ai-generated image detection via panoptic patch learning},
  author={Yang, Zheng and Chen, Ruoxin and Yan, Zhiyuan and Zhang, Ke-Yue and Fu, Xinghe and Wu, Shuang and Shu, Xiujun and Yao, Taiping and Ding, Shouhong and Qin, Zequn and others},
  journal={arXiv preprint arXiv:2504.01396},
  year={2025}
}

@article{fu2025learning,
  title={Learning Human-Perceived Fakeness in AI-Generated Videos via Multimodal LLMs},
  author={Fu, Xingyu and Liu, Siyi and Xu, Yinuo and Lu, Pan and Hu, Guangqiuse and Yang, Tianbo and Anantasagar, Taran and Shen, Christopher and Mao, Yikai and Liu, Yuanzhe and others},
  journal={arXiv preprint arXiv:2509.22646},
  year={2025}
}

@article{davodi2026perceptual,
  title={Perceptual Judgments of Video Authenticity: An Examination of Viewing Duration, Confidence, Content, and Strategies},
  author={Davodi, Catherine E and Barrington, Sarah and Farid, Hany and Cooper, Emily A},
  journal={Law Review},
  volume={107},
  pages={1753--1819},
  year={2026}
}

@article{vaccari2020deepfakes,
  title={Deepfakes and disinformation: Exploring the impact of synthetic political video on deception, uncertainty, and trust in news},
  author={Vaccari, Cristian and Chadwick, Andrew},
  journal={Social media+ society},
  volume={6},
  number={1},
  pages={2056305120903408},
  year={2020},
  publisher={SAGE Publications Sage UK: London, England}
}

@article{chandra2024reducing,
  title={Reducing risks posed by synthetic content an overview of technical approaches to digital content transparency},
  author={Chandra, Bilva and Dunietz, Jesse and Roberts, Kathleen and Lee, Yooyoung and Fontana, Peter and Awad, George},
  year={2024},
  publisher={Bilva Chandra, Jesse Dunietz, Kathleen Roberts, Yooyoung Lee, Peter Fontana~…}
}

@article{corvi2025seeing,
  title={Seeing What Matters: Generalizable AI-generated Video Detection with Forensic-Oriented Augmentation},
  author={Corvi, Riccardo and Cozzolino, Davide and Prashnani, Ekta and De Mello, Shalini and Nagano, Koki and Verdoliva, Luisa},
  journal={arXiv preprint arXiv:2506.16802},
  year={2025}
}

@article{turpin2023language,
  title={Language models don't always say what they think: Unfaithful explanations in chain-of-thought prompting},
  author={Turpin, Miles and Michael, Julian and Perez, Ethan and Bowman, Samuel},
  journal={Advances in Neural Information Processing Systems},
  volume={36},
  pages={74952--74965},
  year={2023}
}

@inproceedings{he2021forgerynet,
  title={Forgerynet: A versatile benchmark for comprehensive forgery analysis},
  author={He, Yinan and Gan, Bei and Chen, Siyu and Zhou, Yichun and Yin, Guojun and Song, Luchuan and Sheng, Lu and Shao, Jing and Liu, Ziwei},
  booktitle={Proceedings of the IEEE/CVF conference on computer vision and pattern recognition},
  pages={4360--4369},
  year={2021}
}

@article{li2026cubecomposer,
  title={CubeComposer: Spatio-Temporal Autoregressive 4K 360° Video Generation from Perspective Video},
  author={Li, Lingen and Wang, Guangzhi and Li, Xiaoyu and Zhang, Zhaoyang and Dou, Qi and Gu, Jinwei and Xue, Tianfan and Shan, Ying},
  journal={arXiv e-prints},
  pages={arXiv--2603},
  year={2026}
}

@article{xiong2026evatok,
  title={Evatok: Adaptive length video tokenization for efficient visual autoregressive generation},
  author={Xiong, Tianwei and Liew, Jun Hao and Huang, Zilong and Lin, Zhijie and Feng, Jiashi and Liu, Xihui},
  journal={arXiv preprint arXiv:2603.12267},
  year={2026}
}

@article{xu2025smrabooth,
  title={SMRABooth: Subject and Motion Representation Alignment for Customized Video Generation},
  author={Xu, Xuancheng and Li, Yaning and You, Sisi and Bao, Bing-Kun},
  journal={arXiv preprint arXiv:2512.12193},
  year={2025}
}

@inproceedings{NEURIPS2022_9d560961,
 author = {Wei, Jason and Wang, Xuezhi and Schuurmans, Dale and Bosma, Maarten and ichter, brian and Xia, Fei and Chi, Ed and Le, Quoc V and Zhou, Denny},
 booktitle = {Advances in Neural Information Processing Systems},
 doi = {10.52202/068431-1800},
 editor = {S. Koyejo and S. Mohamed and A. Agarwal and D. Belgrave and K. Cho and A. Oh},
 pages = {24824--24837},
 publisher = {Curran Associates, Inc.},
 title = {Chain-of-Thought Prompting Elicits Reasoning in Large Language Models},
 url = {https://proceedings.neurips.cc/paper_files/paper/2022/file/9d5609613524ecf4f15af0f7b31abca4-Paper-Conference.pdf},
 volume = {35},
 year = {2022}
}

@inproceedings{3294996.3295184,
author = {Christiano, Paul F. and Leike, Jan and Brown, Tom B. and Martic, Miljan and Legg, Shane and Amodei, Dario},
title = {Deep reinforcement learning from human preferences},
year = {2017},
isbn = {9781510860964},
publisher = {Curran Associates Inc.},
address = {Red Hook, NY, USA},
booktitle = {Proceedings of the 31st International Conference on Neural Information Processing Systems},
pages = {4302–4310},
numpages = {9},
location = {Long Beach, California, USA},
series = {NIPS'17}
}

@misc{amodei2016concreteproblemsaisafety,
      title={Concrete Problems in AI Safety}, 
      author={Dario Amodei and Chris Olah and Jacob Steinhardt and Paul Christiano and John Schulman and Dan Mané},
      year={2016},
      eprint={1606.06565},
      archivePrefix={arXiv},
      primaryClass={cs.AI},
      url={https://arxiv.org/abs/1606.06565}, 
}

@inproceedings{wang-etal-2025-grounded,
    title = "Grounded-{V}ideo{LLM}: Sharpening Fine-grained Temporal Grounding in Video Large Language Models",
    author = "Wang, Haibo  and
      Xu, Zhiyang  and
      Cheng, Yu  and
      Diao, Shizhe  and
      Zhou, Yufan  and
      Cao, Yixin  and
      Wang, Qifan  and
      Ge, Weifeng  and
      Huang, Lifu",
    editor = "Christodoulopoulos, Christos  and
      Chakraborty, Tanmoy  and
      Rose, Carolyn  and
      Peng, Violet",
    booktitle = "Findings of the Association for Computational Linguistics: EMNLP 2025",
    month = nov,
    year = "2025",
    address = "Suzhou, China",
    publisher = "Association for Computational Linguistics",
    url = "https://aclanthology.org/2025.findings-emnlp.50/",
    doi = "10.18653/v1/2025.findings-emnlp.50",
    pages = "959--975",
    ISBN = "979-8-89176-335-7"
}

@article{Guo_2025,
   title={DeepSeek-R1 incentivizes reasoning in LLMs through reinforcement learning},
   volume={645},
   ISSN={1476-4687},
   url={http://dx.doi.org/10.1038/s41586-025-09422-z},
   DOI={10.1038/s41586-025-09422-z},
   number={8081},
   journal={Nature},
   publisher={Springer Science and Business Media LLC},
   author={Guo, Daya and Yang, Dejian and Zhang, Haowei and Song, Junxiao and Wang, Peiyi and Zhu, Qihao and Xu, Runxin and Zhang, Ruoyu and Ma, Shirong and Bi, Xiao and Zhang, Xiaokang and Yu, Xingkai and Wu, Yu and Wu, Z. F. and Gou, Zhibin and Shao, Zhihong and Li, Zhuoshu and Gao, Ziyi and Liu, Aixin and Xue, Bing and Wang, Bingxuan and Wu, Bochao and Feng, Bei and Lu, Chengda and Zhao, Chenggang and Deng, Chengqi and Ruan, Chong and Dai, Damai and Chen, Deli and Ji, Dongjie and Li, Erhang and Lin, Fangyun and Dai, Fucong and Luo, Fuli and Hao, Guangbo and Chen, Guanting and Li, Guowei and Zhang, H. and Xu, Hanwei and Ding, Honghui and Gao, Huazuo and Qu, Hui and Li, Hui and Guo, Jianzhong and Li, Jiashi and Chen, Jingchang and Yuan, Jingyang and Tu, Jinhao and Qiu, Junjie and Li, Junlong and Cai, J. L. and Ni, Jiaqi and Liang, Jian and Chen, Jin and Dong, Kai and Hu, Kai and You, Kaichao and Gao, Kaige and Guan, Kang and Huang, Kexin and Yu, Kuai and Wang, Lean and Zhang, Lecong and Zhao, Liang and Wang, Litong and Zhang, Liyue and Xu, Lei and Xia, Leyi and Zhang, Mingchuan and Zhang, Minghua and Tang, Minghui and Zhou, Mingxu and Li, Meng and Wang, Miaojun and Li, Mingming and Tian, Ning and Huang, Panpan and Zhang, Peng and Wang, Qiancheng and Chen, Qinyu and Du, Qiushi and Ge, Ruiqi and Zhang, Ruisong and Pan, Ruizhe and Wang, Runji and Chen, R. J. and Jin, R. L. and Chen, Ruyi and Lu, Shanghao and Zhou, Shangyan and Chen, Shanhuang and Ye, Shengfeng and Wang, Shiyu and Yu, Shuiping and Zhou, Shunfeng and Pan, Shuting and Li, S. S. and Zhou, Shuang and Wu, Shaoqing and Yun, Tao and Pei, Tian and Sun, Tianyu and Wang, T. and Zeng, Wangding and Liu, Wen and Liang, Wenfeng and Gao, Wenjun and Yu, Wenqin and Zhang, Wentao and Xiao, W. L. and An, Wei and Liu, Xiaodong and Wang, Xiaohan and Chen, Xiaokang and Nie, Xiaotao and Cheng, Xin and Liu, Xin and Xie, Xin and Liu, Xingchao and Yang, Xinyu and Li, Xinyuan and Su, Xuecheng and Lin, Xuheng and Li, X. Q. and Jin, Xiangyue and Shen, Xiaojin and Chen, Xiaosha and Sun, Xiaowen and Wang, Xiaoxiang and Song, Xinnan and Zhou, Xinyi and Wang, Xianzu and Shan, Xinxia and Li, Y. K. and Wang, Y. Q. and Wei, Y. X. and Zhang, Yang and Xu, Yanhong and Li, Yao and Zhao, Yao and Sun, Yaofeng and Wang, Yaohui and Yu, Yi and Zhang, Yichao and Shi, Yifan and Xiong, Yiliang and He, Ying and Piao, Yishi and Wang, Yisong and Tan, Yixuan and Ma, Yiyang and Liu, Yiyuan and Guo, Yongqiang and Ou, Yuan and Wang, Yuduan and Gong, Yue and Zou, Yuheng and He, Yujia and Xiong, Yunfan and Luo, Yuxiang and You, Yuxiang and Liu, Yuxuan and Zhou, Yuyang and Zhu, Y. X. and Huang, Yanping and Li, Yaohui and Zheng, Yi and Zhu, Yuchen and Ma, Yunxian and Tang, Ying and Zha, Yukun and Yan, Yuting and Ren, Z. Z. and Ren, Zehui and Sha, Zhangli and Fu, Zhe and Xu, Zhean and Xie, Zhenda and Zhang, Zhengyan and Hao, Zhewen and Ma, Zhicheng and Yan, Zhigang and Wu, Zhiyu and Gu, Zihui and Zhu, Zijia and Liu, Zijun and Li, Zilin and Xie, Ziwei and Song, Ziyang and Pan, Zizheng and Huang, Zhen and Xu, Zhipeng and Zhang, Zhongyu and Zhang, Zhen},
   year={2025},
   month=sep, pages={633–638} }

\newpage

\appendix

\section{Theoretical Analysis of Evidence-Guided Reward Redistribution}
\label{app:egrr_theory}

This section provides a theoretical analysis of Evidence-Guided Reward
Redistribution (EGRR). We first show that EGRR exactly conserves the
label-level reward mass. We then prove that it preserves correctness as the
primary objective while introducing an orthogonal preference for evidence
quality. Finally, we characterize its policy-gradient effect and compare it
with naive multiplicative reward coupling.

\subsection{Preliminaries}

For a training video $V_i$, the policy samples a group of $G$ responses
$\{o_{ij}\}_{j=1}^{G}$. Let
\begin{equation}
    \ell_{ij}=\mathbb{I}[\hat{y}_{ij}=y_i]\in\{0,1\}
\end{equation}
denote label correctness, and let $e_{ij}\in[0,1]$ denote evidence quality.
Continuous temporal IoU and binary evidence correctness are both covered by
this definition. We further define
\begin{equation}
    z_i=\mathbb{I}[y_i=\mathrm{fake}],\qquad
    \mathcal{C}_i=\{j:\ell_{ij}=1\},\qquad
    M_i=|\mathcal{C}_i|.
\end{equation}
The group-relative evidence baseline is
\begin{equation}
q_i=
\begin{cases}
\displaystyle
\frac{\sum_{j=1}^{G}\ell_{ij}e_{ij}}
     {\sum_{j=1}^{G}\ell_{ij}}, & M_i>0,\\[8pt]
0, & M_i=0.
\end{cases}
\label{eq:app_q}
\end{equation}
The semantic component of the EGRR reward is
\begin{equation}
    r_{ij}^{\mathrm{EGRR}}
    =\ell_{ij}\bigl[1+z_i(e_{ij}-q_i)\bigr]
    =\ell_{ij}+\Delta_{ij},
\label{eq:app_egrr_semantic}
\end{equation}
where
\begin{equation}
    \Delta_{ij}=z_i\ell_{ij}(e_{ij}-q_i)
\label{eq:app_adjustment}
\end{equation}
is the evidence-guided reward adjustment. The complete reward is
\begin{equation}
    R_{ij}^{\mathrm{EGRR}}
    =\lambda_{\mathrm{format}}R_{ij}^{\mathrm{format}}
    +(1-\lambda_{\mathrm{format}})r_{ij}^{\mathrm{EGRR}}.
\label{eq:app_full_reward}
\end{equation}
Since the format term is unchanged from label-level RLVR, the following
analysis focuses on the semantic component in
Eq.~\eqref{eq:app_egrr_semantic}.

\subsection{Reward Conservation and Correctness Preservation}

\begin{lemma}[Exact finite-sample reward conservation]
\label{lem:egrr_conservation}
For every sampled group, the EGRR adjustment is zero-sum:
\begin{equation}
    \sum_{j=1}^{G}\Delta_{ij}=0.
\label{eq:app_zero_sum}
\end{equation}
Consequently,
\begin{equation}
    \sum_{j=1}^{G}r_{ij}^{\mathrm{EGRR}}
    =\sum_{j=1}^{G}\ell_{ij}=M_i.
\label{eq:app_mass_conservation}
\end{equation}
If $M_i>0$, the average semantic reward among label-correct responses is
exactly one:
\begin{equation}
    \frac{1}{M_i}\sum_{j\in\mathcal{C}_i}
    r_{ij}^{\mathrm{EGRR}}=1.
\label{eq:app_correct_mean}
\end{equation}
\end{lemma}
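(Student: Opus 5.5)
The plan is a direct computation that splits on whether the group contains any label-correct response, i.e.\ on $M_i=0$ versus $M_i>0$. Since $\Delta_{ij}=z_i\ell_{ij}(e_{ij}-q_i)$ and $z_i$ does not depend on $j$, we can factor it out:
\begin{equation*}
\sum_{j=1}^{G}\Delta_{ij}=z_i\sum_{j=1}^{G}\ell_{ij}(e_{ij}-q_i).
\end{equation*}
It therefore suffices to show that the inner sum vanishes for every $z_i\in\{0,1\}$, whether $e_{ij}$ is binary or continuous in $[0,1]$.

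\textbf{Case $M_i=0$.} Every $\ell_{ij}=0$, so each summand is zero regardless of the value $q_i=0$ assigned by the convention in Eq.~\eqref{eq:app_q}.

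\textbf{Case $M_i>0$.} Expand the inner sum as
\begin{equation*}
\sum_{j}\ell_{ij}e_{ij}-q_i\sum_{j}\ell_{ij}=\sum_{j}\ell_{ij}e_{ij}-q_iM_i.
\end{equation*}
By the definition of $q_i$, we have $q_iM_i=\sum_j\ell_{ij}e_{ij}$, so the difference is zero. This proves Eq.~\eqref{eq:app_zero_sum}.

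For Eq.~\eqref{eq:app_mass_conservation}, sum the decomposition $r_{ij}^{\mathrm{EGRR}}=\ell_{ij}+\Delta_{ij}$ from Eq.~\eqref{eq:app_egrr_semantic} over $j$. The zero-sum property removes the $\Delta$ terms, leaving $\sum_j\ell_{ij}=M_i$.

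For Eq.~\eqref{eq:app_correct_mean}, assume $M_i>0$. Every $j\notin\mathcal{C}_i$ has $\ell_{ij}=0$ and hence $r_{ij}^{\mathrm{EGRR}}=0$. The sum over $\mathcal{C}_i$ therefore equals the full sum over $j$, which is $M_i$. Dividing by $M_i$ gives $1$.

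There is no real obstacle here. The only point needing care is the degenerate case $M_i=0$, where $q_i$ is set by convention rather than by the ratio formula; it must be handled separately so that the ratio formula is never applied with a zero denominator.
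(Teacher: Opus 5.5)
Your proof is correct and follows essentially the same route as the paper. Both arguments split on $M_i=0$ versus $M_i>0$ and use $q_iM_i=\sum_{j\in\mathcal{C}_i}e_{ij}$ to make the adjustment sum vanish. Both then substitute into $r_{ij}^{\mathrm{EGRR}}=\ell_{ij}+\Delta_{ij}$ and restrict the sum to $\mathcal{C}_i$ to get the mean. Your explicit observation that $r_{ij}^{\mathrm{EGRR}}=0$ for $j\notin\mathcal{C}_i$ simply spells out the paper's one-line ``restricting the same sum'' step.
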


\begin{proof}
When $M_i=0$, every $\ell_{ij}$ is zero. Equations
\eqref{eq:app_zero_sum}--\eqref{eq:app_mass_conservation} then hold
immediately. When $M_i>0$, Eq.~\eqref{eq:app_q} gives
\begin{align}
    \sum_{j=1}^{G}\Delta_{ij}
    &=z_i\sum_{j\in\mathcal{C}_i}(e_{ij}-q_i) \\
    &=z_i\left(\sum_{j\in\mathcal{C}_i}e_{ij}-M_iq_i\right)=0.
\end{align}
Substituting this identity into Eq.~\eqref{eq:app_egrr_semantic} yields
Eq.~\eqref{eq:app_mass_conservation}. Restricting the same sum to
$\mathcal{C}_i$ and dividing by $M_i$ proves
Eq.~\eqref{eq:app_correct_mean}.
\end{proof}

\begin{corollary}[Preservation of the group mean]
\label{cor:egrr_group_mean}
EGRR has exactly the same group-mean semantic reward as label-level RLVR:
\begin{equation}
    \frac{1}{G}\sum_{j=1}^{G}r_{ij}^{\mathrm{EGRR}}
    =\frac{1}{G}\sum_{j=1}^{G}\ell_{ij}
    =\frac{M_i}{G}.
\end{equation}
Therefore, the full group reward satisfies
\begin{equation}
    \sum_{j=1}^{G}R_{ij}^{\mathrm{EGRR}}
    =\lambda_{\mathrm{format}}
      \sum_{j=1}^{G}R_{ij}^{\mathrm{format}}
     +(1-\lambda_{\mathrm{format}})M_i,
\end{equation}
which is identical to the reward mass produced by label-level RLVR with the
same format term. Under exchangeable sampling, it also follows that
$\mathbb{E}[r_{ij}^{\mathrm{EGRR}}]=\mathbb{E}[\ell_{ij}]$ for every response
index $j$.
\end{corollary}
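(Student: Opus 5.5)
The corollary follows almost immediately from Lemma~\ref{lem:egrr_conservation}, so I will derive its three claims in order, each as a short consequence.

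\textbf{Group-mean semantic reward.} Eq.~\eqref{eq:app_mass_conservation} already gives $\sum_{j=1}^{G} r_{ij}^{\mathrm{EGRR}}=\sum_{j=1}^{G}\ell_{ij}=M_i$ for every sampled group. This includes the degenerate case $M_i=0$, which the lemma covers through the convention $q_i=0$. Dividing by $G$ yields the first display.

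\textbf{Full group reward.} I will sum Eq.~\eqref{eq:app_full_reward} over $j$ and use linearity. The format term carries over unchanged, and the semantic term contributes $(1-\lambda_{\mathrm{format}})M_i$. I will then compare this with label-level RLVR, Eq.~\eqref{eq:traditional_rlvr_reward}, whose label reward is $R_{\mathrm{label}}^{(i,j)}=\ell_{ij}$. Summing that reward gives the same expression, provided the same format rewards are used.

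\textbf{Expectation identity.} This is the only step needing care. The issue is that $q_i$ couples the responses within a group, so $\Delta_{ij}$ is not a function of $o_{ij}$ alone. I plan to argue as follows.
\begin{itemize}
\item Assume the $G$ responses are exchangeable, for example drawn i.i.d.\ from $\pi_\theta(\cdot\mid V_i)$.
\item The map $(o_{i1},\dots,o_{iG})\mapsto(\Delta_{i1},\dots,\Delta_{iG})$ is permutation-equivariant, because $q_i$ is a symmetric function of the group.
\item Exchangeability then makes $\mathbb{E}[\Delta_{ij}]$ independent of $j$.
\item Taking expectations of the zero-sum identity Eq.~\eqref{eq:app_zero_sum} gives $G\,\mathbb{E}[\Delta_{ij}]=0$.
\item Hence $\mathbb{E}[r_{ij}^{\mathrm{EGRR}}]=\mathbb{E}[\ell_{ij}]+\mathbb{E}[\Delta_{ij}]=\mathbb{E}[\ell_{ij}]$.
\end{itemize}

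I expect the main obstacle to be stating the exchangeability and equivariance step precisely, since everything else is algebra. The expectations are finite because $e_{ij}\in[0,1]$ and $\ell_{ij}\in\{0,1\}$, which gives $|\Delta_{ij}|\le 1$.
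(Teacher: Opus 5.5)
Your proposal is correct and matches the paper's proof: the first two claims come from Lemma~\ref{lem:egrr_conservation} by dividing by $G$ and summing the full reward linearly, and the expectation identity comes from exchangeability plus taking expectations of the conservation identity. Your permutation-equivariance remark (needed because $q_i$ couples the group) and your use of the zero-sum form $\sum_j\Delta_{ij}=0$ in place of Eq.~\eqref{eq:app_mass_conservation} only make explicit what the paper's one-line ``all response indices have the same expected reward'' leaves implicit.
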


\begin{proof}
The group-mean identity follows directly from
Lemma~\ref{lem:egrr_conservation}. For exchangeable samples, all response
indices have the same expected reward. Taking expectations on both sides of
Eq.~\eqref{eq:app_mass_conservation} and dividing by $G$ completes the proof.
\end{proof}

\begin{lemma}[Uniqueness of the group-relative baseline]
\label{lem:egrr_unique_baseline}
Consider a fake video with $M_i>0$ and an affine evidence calibration
\begin{equation}
    r_{ij}(b)=\ell_{ij}[1+e_{ij}-b],
\end{equation}
where the same scalar baseline $b$ is used for all label-correct responses.
The reward-conservation constraint
$\sum_{j=1}^{G}r_{ij}(b)=M_i$ holds if and only if $b=q_i$. Moreover,
\begin{equation}
    q_i=\arg\min_{b\in\mathbb{R}}
    \sum_{j\in\mathcal{C}_i}(e_{ij}-b)^2.
\label{eq:app_least_squares_baseline}
\end{equation}
Thus, the EGRR baseline is the unique constant centering term that conserves
the label reward mass, and it is also the least-squares center of the sampled
evidence scores.
\end{lemma}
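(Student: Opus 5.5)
The plan is to compute the group sum of $r_{ij}(b)$ directly, read off the condition on $b$, and then handle the least-squares claim as a separate one-variable minimization. Since $\ell_{ij}=0$ outside $\mathcal{C}_i$, only label-correct responses contribute:
\begin{equation*}
\sum_{j=1}^{G} r_{ij}(b)=\sum_{j\in\mathcal{C}_i}\bigl(1+e_{ij}-b\bigr)=M_i+\sum_{j\in\mathcal{C}_i}e_{ij}-M_i b .
\end{equation*}
Setting this equal to $M_i$ gives $M_i b=\sum_{j\in\mathcal{C}_i}e_{ij}$.

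Because $M_i>0$ by hypothesis, we can divide by $M_i$. This yields $b=\frac{1}{M_i}\sum_{j\in\mathcal{C}_i}e_{ij}$, which equals $q_i$ by Eq.~\eqref{eq:app_q}, since $\sum_j\ell_{ij}e_{ij}=\sum_{j\in\mathcal{C}_i}e_{ij}$ and $\sum_j\ell_{ij}=M_i$. Each manipulation is an equivalence, so this proves both directions of the claim. The ``if'' direction alone is also an instance of Lemma~\ref{lem:egrr_conservation} with $z_i=1$.

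For the least-squares statement, define $f(b)=\sum_{j\in\mathcal{C}_i}(e_{ij}-b)^2$. This is a quadratic in $b$ with leading coefficient $M_i>0$, so it is strictly convex. Its derivative is $f'(b)=-2\sum_{j\in\mathcal{C}_i}(e_{ij}-b)$, and $f'(b)=0$ gives exactly the same linear equation $M_i b=\sum_{j\in\mathcal{C}_i}e_{ij}$ as above. Hence $q_i$ is the unique minimizer.

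Alternatively, one can expand
\begin{equation*}
f(b)=f(q_i)+M_i(b-q_i)^2,
\end{equation*}
using the zero-sum identity $\sum_{j\in\mathcal{C}_i}(e_{ij}-q_i)=0$ to kill the cross term. This shows minimality and uniqueness at once, without calculus.

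No step is a real obstacle. The only point needing care is to state explicitly that $M_i>0$ is what permits division and gives strict convexity; when $M_i=0$ every $r_{ij}(b)$ vanishes and $b$ is undetermined.
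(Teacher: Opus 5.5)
Your proposal is correct and matches the paper's proof: you expand the group sum to $M_i+\sum_{j\in\mathcal{C}_i}e_{ij}-M_ib$, solve for $b=q_i$ using $M_i>0$, and get the least-squares claim from $f'(b)=0$ together with strict convexity ($f''(b)=2M_i>0$). Your completing-the-square identity $f(b)=f(q_i)+M_i(b-q_i)^2$ is a valid calculus-free alternative that the paper does not use, but it does not change the substance of the argument.
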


\begin{proof}
For a fake video,
\begin{equation}
    \sum_{j=1}^{G}r_{ij}(b)
    =M_i+\sum_{j\in\mathcal{C}_i}e_{ij}-M_ib.
\end{equation}
This quantity equals $M_i$ if and only if
$b=M_i^{-1}\sum_{j\in\mathcal{C}_i}e_{ij}=q_i$. For the second claim, define
$f(b)=\sum_{j\in\mathcal{C}_i}(e_{ij}-b)^2$. Then
\begin{equation}
    f'(b)=2M_ib-2\sum_{j\in\mathcal{C}_i}e_{ij},
    \qquad f''(b)=2M_i>0.
\end{equation}
Hence, $f$ is strictly convex and has the unique minimizer $b=q_i$.
\end{proof}

\begin{theorem}[Correctness dominance and bounded calibration]
\label{thm:egrr_bounds}
For every label-incorrect response, $r_{ij}^{\mathrm{EGRR}}=0$. For every
label-correct response from a real video,
$r_{ij}^{\mathrm{EGRR}}=1$. For a fake video with $M_i\geq1$, every
$j\in\mathcal{C}_i$ satisfies
\begin{equation}
    \frac{1}{M_i}
    \leq r_{ij}^{\mathrm{EGRR}}
    \leq 2-\frac{1}{M_i}.
\label{eq:app_reward_bounds}
\end{equation}
Therefore, every label-correct response receives a strictly positive semantic
reward, whereas every label-incorrect response receives zero semantic reward.
Evidence quality refines credit only after label correctness has been
established. In particular, for every label-correct $j$ and label-incorrect
$k$ from the same group,
\begin{equation}
    r_{ij}^{\mathrm{EGRR}}-r_{ik}^{\mathrm{EGRR}}
    \geq \frac{1}{M_i}>0.
\end{equation}
\end{theorem}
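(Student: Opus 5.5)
The proof will work directly from the definition $r_{ij}^{\mathrm{EGRR}}=\ell_{ij}[1+z_i(e_{ij}-q_i)]$ in Eq.~\eqref{eq:app_egrr_semantic}, splitting into cases by label correctness and by $z_i$.

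The first two cases are immediate. If $\ell_{ij}=0$, the prefactor kills everything, so $r_{ij}^{\mathrm{EGRR}}=0$. If $\ell_{ij}=1$ and the video is real, then $z_i=0$ and $r_{ij}^{\mathrm{EGRR}}=1$.

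The substantive case is a fake video with $M_i\ge 1$ and $j\in\mathcal{C}_i$. Here $r_{ij}^{\mathrm{EGRR}}=1+e_{ij}-q_i$, so I need to bound $e_{ij}-q_i$ within $[-(1-1/M_i),\,1-1/M_i]$. I will write
\[
q_i=\frac{1}{M_i}\Bigl(e_{ij}+\sum_{k\in\mathcal{C}_i\setminus\{j\}}e_{ik}\Bigr),
\]
which gives
\[
e_{ij}-q_i=\Bigl(1-\tfrac{1}{M_i}\Bigr)e_{ij}-\tfrac{1}{M_i}\sum_{k\in\mathcal{C}_i\setminus\{j\}}e_{ik}.
\]
Using $e_{ik}\in[0,1]$, the right-hand side is at most $(1-1/M_i)\cdot 1-0=1-1/M_i$. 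It is at least $0-\frac{M_i-1}{M_i}=-(1-1/M_i)$. Adding $1$ yields $1/M_i\le r_{ij}^{\mathrm{EGRR}}\le 2-1/M_i$, which is Eq.~\eqref{eq:app_reward_bounds}.

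The only care needed is to isolate $e_{ij}$'s own contribution to $q_i$. The cruder bound $|e_{ij}-q_i|\le 1$ would only give a lower bound of $0$. I also note that for $M_i=1$ both bounds collapse to $1$, which is consistent with the fact that $q_i=e_{ij}$ in that case.

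Strict positivity then follows since $1/M_i>0$. For a real video the reward is $1\ge 1/M_i$, so the lower bound holds there too. For the dominance claim, take $j$ label-correct and $k$ label-incorrect in the same group. Then $r_{ik}^{\mathrm{EGRR}}=0$, and the difference is at least $1/M_i>0$. This holds for both real and fake videos, since in either case $M_i\ge 1$ whenever a correct $j$ exists.
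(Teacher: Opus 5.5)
Your proof is correct and is essentially the paper's argument. Your identity $e_{ij}-q_i=(1-\tfrac{1}{M_i})e_{ij}-\tfrac{1}{M_i}\sum_{k\in\mathcal{C}_i\setminus\{j\}}e_{ik}$ is the paper's leave-one-out form $e_{ij}-q_i=\tfrac{M_i-1}{M_i}(e_{ij}-\bar e_{i,-j})$ without naming $\bar e_{i,-j}$, so it handles $M_i=1$ via the empty sum rather than as a separate case, and you also check the dominance inequality for real videos, which the paper leaves implicit.
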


\begin{proof}
The first two claims follow directly from
Eq.~\eqref{eq:app_egrr_semantic}. Consider a fake video and a response
$j\in\mathcal{C}_i$. If $M_i=1$, then $q_i=e_{ij}$ and
$r_{ij}^{\mathrm{EGRR}}=1$, which agrees with
Eq.~\eqref{eq:app_reward_bounds}. If $M_i>1$, let
\begin{equation}
    \bar e_{i,-j}
    =\frac{1}{M_i-1}
      \sum_{k\in\mathcal{C}_i\setminus\{j\}}e_{ik}.
\end{equation}
Then
\begin{equation}
    e_{ij}-q_i
    =\frac{M_i-1}{M_i}(e_{ij}-\bar e_{i,-j}).
\end{equation}
Since both $e_{ij}$ and $\bar e_{i,-j}$ lie in $[0,1]$, the centered residual
lies in
$[-(M_i-1)/M_i,(M_i-1)/M_i]$. Adding one proves
Eq.~\eqref{eq:app_reward_bounds}.
\end{proof}

\subsection{Orthogonal Decomposition of Label and Evidence Signals}

\begin{theorem}[Orthogonal label--evidence decomposition]
\label{thm:egrr_orthogonal}
Let $\bm{\ell}_i=(\ell_{i1},\ldots,\ell_{iG})^\top$ and let
$\mathbf{1}\in\mathbb{R}^{G}$ be the all-ones vector. Define the centered
label-reward vector
\begin{equation}
    \bm{a}_i^{\mathrm{label}}
    =\bm{\ell}_i-\frac{M_i}{G}\mathbf{1},
\end{equation}
and the evidence-adjustment vector
\begin{equation}
    \bm{\delta}_i
    =(\Delta_{i1},\ldots,\Delta_{iG})^\top.
\end{equation}
The group-centered EGRR reward admits the exact decomposition
\begin{equation}
    \bm{a}_i^{\mathrm{EGRR}}
    =\bm{a}_i^{\mathrm{label}}+\bm{\delta}_i.
\label{eq:app_advantage_decomposition}
\end{equation}
Moreover,
\begin{equation}
    \langle\bm{\delta}_i,\mathbf{1}\rangle=0,
    \qquad
    \langle\bm{\delta}_i,\bm{a}_i^{\mathrm{label}}\rangle=0.
\label{eq:app_orthogonality}
\end{equation}
If
\begin{equation}
    \operatorname{Var}_{\mathcal{C}_i}(e)
    =\begin{cases}
    \displaystyle
    \frac{1}{M_i}\sum_{j\in\mathcal{C}_i}(e_{ij}-q_i)^2,
    &M_i>0,\\[6pt]
    0,&M_i=0,
    \end{cases}
\end{equation}
then the group-centered reward variance satisfies
\begin{equation}
\begin{aligned}
    \frac{1}{G}\|\bm{a}_i^{\mathrm{EGRR}}\|_2^2
    &=\frac{M_i}{G}\left(1-\frac{M_i}{G}\right)
      +z_i\frac{M_i}{G}\operatorname{Var}_{\mathcal{C}_i}(e).
\end{aligned}
\label{eq:app_variance_decomposition}
\end{equation}
Thus, EGRR leaves the between-class label component unchanged and adds an
orthogonal within-label-correct evidence component.
\end{theorem}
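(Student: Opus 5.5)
We take the group-centered EGRR reward to be $\bm{a}_i^{\mathrm{EGRR}}=\bm{r}_i^{\mathrm{EGRR}}-\bigl(\tfrac{1}{G}\sum_{j}r_{ij}^{\mathrm{EGRR}}\bigr)\mathbf{1}$, where $\bm{r}_i^{\mathrm{EGRR}}=\bm{\ell}_i+\bm{\delta}_i$ by Eq.~\eqref{eq:app_egrr_semantic}. The argument runs in three steps.

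\textbf{Step 1: the decomposition.} Corollary~\ref{cor:egrr_group_mean} says the group mean of $\bm{r}_i^{\mathrm{EGRR}}$ equals $M_i/G$. Subtracting this mean gives
\[
\bm{a}_i^{\mathrm{EGRR}}=\bm{\ell}_i+\bm{\delta}_i-\tfrac{M_i}{G}\mathbf{1}=\bm{a}_i^{\mathrm{label}}+\bm{\delta}_i ,
\]
which is Eq.~\eqref{eq:app_advantage_decomposition}.

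\textbf{Step 2: orthogonality.} The identity $\langle\bm{\delta}_i,\mathbf{1}\rangle=0$ is exactly Lemma~\ref{lem:egrr_conservation}. For the second identity, expand
\[
\langle\bm{\delta}_i,\bm{a}_i^{\mathrm{label}}\rangle=\langle\bm{\delta}_i,\bm{\ell}_i\rangle-\tfrac{M_i}{G}\langle\bm{\delta}_i,\mathbf{1}\rangle .
\]
The second term vanishes by Lemma~\ref{lem:egrr_conservation}. For the first, use $\ell_{ij}^2=\ell_{ij}$, since $\ell_{ij}\in\{0,1\}$. Then $\langle\bm{\delta}_i,\bm{\ell}_i\rangle=\sum_j z_i\ell_{ij}(e_{ij}-q_i)=\sum_j\Delta_{ij}=0$, again by Lemma~\ref{lem:egrr_conservation}. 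The case $M_i=0$ is trivial because $\bm{\delta}_i=0$.

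\textbf{Step 3: the variance identity.} By the Pythagorean identity and Step 2,
\[
\|\bm{a}_i^{\mathrm{EGRR}}\|^2=\|\bm{a}_i^{\mathrm{label}}\|^2+\|\bm{\delta}_i\|^2 .
\]
For the label part, $\|\bm{\ell}_i-p\mathbf{1}\|^2$ with $p=M_i/G$ consists of $M_i$ entries equal to $(1-p)^2$ and $G-M_i$ entries equal to $p^2$. This sums to $G\,p(1-p)$, which after dividing by $G$ gives the first term $\tfrac{M_i}{G}\bigl(1-\tfrac{M_i}{G}\bigr)$. For the evidence part, $\|\bm{\delta}_i\|^2=z_i^2\sum_{j\in\mathcal{C}_i}(e_{ij}-q_i)^2$. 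Using $z_i^2=z_i$ and the definition of $\operatorname{Var}_{\mathcal{C}_i}(e)$, this equals $z_iM_i\operatorname{Var}_{\mathcal{C}_i}(e)$, and dividing by $G$ gives the second term. The case $M_i=0$ is consistent because both sides are then zero.

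\textbf{Main obstacle.} There is little genuine difficulty here. The only point needing care is fixing the meaning of ``group-centered reward''. If the intended centering subtracts the full-reward mean, including the format term, the same argument goes through: Corollary~\ref{cor:egrr_group_mean} shows that mean matches label-level RLVR. Any standard-deviation normalization, however, must be excluded, since the statement concerns the unnormalized centered vector. We will state this convention explicitly at the start of the proof.
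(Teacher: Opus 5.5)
Your proof is correct and follows the paper's argument essentially step for step. The common group mean $M_i/G$ from Corollary~\ref{cor:egrr_group_mean} gives the decomposition, and the zero-sum identity of Lemma~\ref{lem:egrr_conservation} gives both orthogonality relations; your use of $\ell_{ij}^2=\ell_{ij}$ is the same observation as the paper's $\Delta_{ij}=0$ for $j\notin\mathcal{C}_i$. Pythagoras plus a direct computation of the two squared norms then gives the variance identity.

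One small caveat concerns your closing remark, which says the argument goes through unchanged if the format term is included in the centering. It does not carry over verbatim. The centered vector would be $\lambda_{\mathrm{format}}\widetilde{\bm{F}}_i+(1-\lambda_{\mathrm{format}})(\bm{a}_i^{\mathrm{label}}+\bm{\delta}_i)$, with $\widetilde{\bm{F}}_i$ the centered format rewards. Orthogonality of $\bm{\delta}_i$ to the format part would then require the format reward to be constant on $\mathcal{C}_i$, and the variance formula would pick up extra terms. Restricting to the semantic component, as your main proof and the paper both do, is the right convention.
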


\begin{proof}
By Corollary~\ref{cor:egrr_group_mean}, both label-level RLVR and EGRR have
the same group mean $M_i/G$. Subtracting this mean from
Eq.~\eqref{eq:app_egrr_semantic} gives
Eq.~\eqref{eq:app_advantage_decomposition}. The first identity in
Eq.~\eqref{eq:app_orthogonality} is exactly the zero-sum property in
Lemma~\ref{lem:egrr_conservation}. Since $\Delta_{ij}=0$ whenever
$j\notin\mathcal{C}_i$, we further obtain
\begin{align}
    \langle\bm{\delta}_i,\bm{a}_i^{\mathrm{label}}\rangle
    &=\sum_{j\in\mathcal{C}_i}
      \Delta_{ij}\left(1-\frac{M_i}{G}\right)\\
    &=\left(1-\frac{M_i}{G}\right)
      \sum_{j=1}^{G}\Delta_{ij}=0.
\end{align}
Therefore, the two components are orthogonal. It follows that
\begin{equation}
    \|\bm{a}_i^{\mathrm{EGRR}}\|_2^2
    =\|\bm{a}_i^{\mathrm{label}}\|_2^2
     +\|\bm{\delta}_i\|_2^2.
\end{equation}
Direct calculation gives
\begin{equation}
    \frac{1}{G}\|\bm{a}_i^{\mathrm{label}}\|_2^2
    =\frac{M_i}{G}\left(1-\frac{M_i}{G}\right)
\end{equation}
and
\begin{equation}
    \frac{1}{G}\|\bm{\delta}_i\|_2^2
    =z_i\frac{M_i}{G}\operatorname{Var}_{\mathcal{C}_i}(e).
\end{equation}
Combining these two identities proves
Eq.~\eqref{eq:app_variance_decomposition}.
\end{proof}

\begin{corollary}[Adaptive fallback to label-level RLVR]
\label{cor:egrr_fallback}
For a fake video, EGRR reduces exactly to label-level RLVR if and only if
$\operatorname{Var}_{\mathcal{C}_i}(e)=0$, with the convention above for
$M_i=0$. Moreover,
\begin{equation}
    \sum_{j=1}^{G}
    \bigl(r_{ij}^{\mathrm{EGRR}}-\ell_{ij}\bigr)^2
    =z_iM_i\operatorname{Var}_{\mathcal{C}_i}(e)
    \leq \frac{z_iM_i}{4}.
\label{eq:app_adaptive_strength}
\end{equation}
Hence, EGRR introduces no evidence-based perturbation when the sampled
evidence is non-discriminative. Its calibration strength increases
automatically with evidence dispersion and remains bounded.
\end{corollary}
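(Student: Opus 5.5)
The plan is to work directly from the definition $r_{ij}^{\mathrm{EGRR}}-\ell_{ij}=\Delta_{ij}=z_i\ell_{ij}(e_{ij}-q_i)$ in Eq.~\eqref{eq:app_adjustment}. Everything then reduces to a statement about the vector $\bm{\delta}_i$. The first step is the identity in Eq.~\eqref{eq:app_adaptive_strength}. Since $\Delta_{ij}=0$ for $j\notin\mathcal{C}_i$, and since $z_i^2=z_i$ and $\ell_{ij}^2=\ell_{ij}$, we get
\begin{equation*}
\sum_{j=1}^{G}\bigl(r_{ij}^{\mathrm{EGRR}}-\ell_{ij}\bigr)^2=z_i\sum_{j\in\mathcal{C}_i}(e_{ij}-q_i)^2=z_iM_i\operatorname{Var}_{\mathcal{C}_i}(e).
\end{equation*}
This is exactly the quantity $\|\bm{\delta}_i\|_2^2$ already computed in the proof of Theorem~\ref{thm:egrr_orthogonal}, so that proof can be cited rather than redone. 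When $M_i=0$, both sides vanish under the stated convention.

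The second step is the ``if and only if'' characterization. I read ``reduces exactly to label-level RLVR'' as meaning that $r_{ij}^{\mathrm{EGRR}}=\ell_{ij}$ for every $j$. Because the format term is shared, this is equivalent to the full rewards coinciding. This condition holds if and only if the left-hand side of Eq.~\eqref{eq:app_adaptive_strength} is zero, since it is a sum of squares. For a fake video we have $z_i=1$, so the identity turns this into $M_i\operatorname{Var}_{\mathcal{C}_i}(e)=0$. If $M_i>0$, this is equivalent to $\operatorname{Var}_{\mathcal{C}_i}(e)=0$. If $M_i=0$, the variance is $0$ by convention and the rewards are all $0=\ell_{ij}$ trivially. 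Both directions therefore follow at once.

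The third step is the bound $\operatorname{Var}_{\mathcal{C}_i}(e)\le 1/4$, which gives the final inequality after multiplying by $z_iM_i\ge 0$. This is the only step with any content, and I expect it to be the main (mild) obstacle. Since $e_{ij}\in[0,1]$, we have $e_{ij}^2\le e_{ij}$. Then
\begin{equation*}
\operatorname{Var}_{\mathcal{C}_i}(e)=\frac{1}{M_i}\sum_{j\in\mathcal{C}_i}e_{ij}^2-q_i^2\le q_i-q_i^2\le\tfrac14,
\end{equation*}
where the last inequality uses the maximum of $q(1-q)$ on $[0,1]$. This is Popoviciu's inequality specialized to $[0,1]$.

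Finally, I will close with a remark explaining the interpretive sentences. ``No perturbation when non-discriminative'' is the ``if'' direction. ``Strength grows with dispersion'' is the linear dependence on $\operatorname{Var}_{\mathcal{C}_i}(e)$ in the identity. ``Bounded'' is the $z_iM_i/4$ estimate, which is at most $G/4$.
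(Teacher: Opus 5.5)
Your proposal is correct and follows essentially the same route as the paper: the identity comes directly from $\Delta_{ij}=z_i\ell_{ij}(e_{ij}-q_i)$, the equivalence comes from the vanishing of a sum of squared residuals (the paper phrases this as a finite set having zero variance), and the bound comes from the variance of a $[0,1]$-valued quantity being at most $1/4$. Your derivation of that bound via $e_{ij}^2\le e_{ij}$, which gives the intermediate quantity $q_i(1-q_i)$, spells out the fact the paper only cites, and it also recovers the paper's binary-evidence remark, where equality holds.
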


\begin{proof}
Equation~\eqref{eq:app_adaptive_strength} follows from the definition of
$\bm{\delta}_i$. A finite set has zero variance if and only if all its values
are identical. Thus, every centered residual is zero exactly when
$\operatorname{Var}_{\mathcal{C}_i}(e)=0$. Finally, any random variable
supported on $[0,1]$ has variance at most $1/4$, which proves the bound. For
binary evidence, the variance is $q_i(1-q_i)$. EGRR therefore reduces to
label-level RLVR when all label-correct responses have incorrect evidence
($q_i=0$) or when all have correct evidence ($q_i=1$).
\end{proof}

\paragraph{Relation to normalized group advantages.}
If a GRPO-style method uses
$\widetilde{\bm{a}}_i=(\bm{r}_i-\bar r_i\mathbf{1})/(s_i+\epsilon)$,
Eq.~\eqref{eq:app_advantage_decomposition} remains exact in the numerator.
Standard-deviation normalization applies only a common positive scalar to the
sum of the two components. It does not change reward conservation, evidence
ordering, or the orthogonality of the unnormalized components.

\subsection{Evidence Preference and Policy-Gradient Interpretation}

\begin{lemma}[Evidence-order preservation]
\label{lem:egrr_ordering}
For any two label-correct responses $j,k\in\mathcal{C}_i$ from the same fake
video,
\begin{equation}
    r_{ij}^{\mathrm{EGRR}}-r_{ik}^{\mathrm{EGRR}}
    =e_{ij}-e_{ik}.
\label{eq:app_order_preservation}
\end{equation}
Thus, EGRR induces exactly the same pairwise ordering and margin as the
evidence score. Label-level RLVR assigns zero margin to the same pair.
\end{lemma}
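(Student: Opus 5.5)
The plan is a direct substitution into the definition of the semantic reward in Eq.~\eqref{eq:app_egrr_semantic}, specialized to label-correct responses of a fake video. Fix a fake video $V_i$, so $z_i=1$, and take any $j,k\in\mathcal{C}_i$. Then $\ell_{ij}=\ell_{ik}=1$ and in particular $M_i\geq 1$. Hence $q_i$ is given by the first branch of Eq.~\eqref{eq:app_q}, and it is a single scalar shared by every response in the group.

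Substituting gives $r_{ij}^{\mathrm{EGRR}}=1+e_{ij}-q_i$ and $r_{ik}^{\mathrm{EGRR}}=1+e_{ik}-q_i$. Subtracting, the constant $1$ and the common baseline $q_i$ cancel, which yields Eq.~\eqref{eq:app_order_preservation}.

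The ordering claim follows at once. The map $e\mapsto 1+e-q_i$ is a translation, hence strictly increasing. Therefore $e_{ij}>e_{ik}$ if and only if $r_{ij}^{\mathrm{EGRR}}>r_{ik}^{\mathrm{EGRR}}$, with equality holding in the same cases, and the margin equals $e_{ij}-e_{ik}$.

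For the comparison with label-level RLVR, recall that its semantic reward is $\ell_{ij}$. This gives $\ell_{ij}-\ell_{ik}=1-1=0$ for the same pair. So label-level RLVR assigns zero margin regardless of evidence.

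There is no genuine obstacle. The only point needing care is that $q_i$ must be common to $j$ and $k$, which holds because both responses belong to the same group $i$. It is also worth noting that the argument uses neither the conservation property of Lemma~\ref{lem:egrr_conservation} nor the binary restriction on $e_{ij}$. The identity therefore holds for continuous evidence scores $e_{ij}\in[0,1]$ as well, consistent with the preliminaries.
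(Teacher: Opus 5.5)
Your proof is correct and matches the paper's: both responses have $\ell_{ij}=\ell_{ik}=1$ and share the group baseline $q_i$, so subtracting the two rewards cancels the unit label reward and $q_i$. Your extra remarks are accurate and make explicit what the paper's one-line proof leaves implicit, namely the monotone translation argument for ordering, the zero margin under label-level RLVR, and validity for continuous $e_{ij}\in[0,1]$.
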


\begin{proof}
Both responses have $\ell_{ij}=\ell_{ik}=1$ and share the same baseline $q_i$.
Subtracting their rewards cancels both the unit label reward and $q_i$, which
yields Eq.~\eqref{eq:app_order_preservation}.
\end{proof}

\begin{lemma}[Pairwise form of the evidence gradient]
\label{lem:egrr_pairwise_gradient}
Let $\bm{g}_{ij}$ be any vector associated with response $o_{ij}$. For a fake
video with $M_i>0$,
\begin{equation}
\begin{aligned}
    \sum_{j\in\mathcal{C}_i}(e_{ij}-q_i)\bm{g}_{ij}
    =\frac{1}{M_i}
      \sum_{\substack{j<k\\j,k\in\mathcal{C}_i}}
      (e_{ij}-e_{ik})(\bm{g}_{ij}-\bm{g}_{ik}).
\end{aligned}
\label{eq:app_pairwise_gradient}
\end{equation}
When $\bm{g}_{ij}=\nabla_\theta\log\pi_\theta(o_{ij}\mid V_i)$, the EGRR
adjustment is therefore a sum of pairwise policy-gradient preferences between
label-correct responses.
\end{lemma}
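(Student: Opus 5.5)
The plan is to expand the right-hand side of Eq.~\eqref{eq:app_pairwise_gradient} and reduce it to the left-hand side, using only that $q_i$ is the empirical mean of $\{e_{ij}\}_{j\in\mathcal{C}_i}$ (Eq.~\eqref{eq:app_q} with $M_i>0$). First, I will rewrite the sum over unordered pairs $j<k$ as one half of the sum over all ordered pairs $(j,k)\in\mathcal{C}_i\times\mathcal{C}_i$. This is legitimate because the summand $(e_{ij}-e_{ik})(\bm{g}_{ij}-\bm{g}_{ik})$ is symmetric under swapping $j$ and $k$, and it vanishes when $j=k$. So the right-hand side becomes
\[
\frac{1}{2M_i}\sum_{j\in\mathcal{C}_i}\sum_{k\in\mathcal{C}_i}(e_{ij}-e_{ik})(\bm{g}_{ij}-\bm{g}_{ik}).
\]

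Next, I will distribute the product into four terms:
\[
e_{ij}\bm{g}_{ij}-e_{ij}\bm{g}_{ik}-e_{ik}\bm{g}_{ij}+e_{ik}\bm{g}_{ik}.
\]
Summing over both indices, the first and last terms each give $M_i\sum_{j\in\mathcal{C}_i}e_{ij}\bm{g}_{ij}$. The two cross terms each give $\bigl(\sum_{j\in\mathcal{C}_i}e_{ij}\bigr)\bigl(\sum_{k\in\mathcal{C}_i}\bm{g}_{ik}\bigr)=M_iq_i\sum_{j\in\mathcal{C}_i}\bm{g}_{ij}$, because $\sum_{j\in\mathcal{C}_i}e_{ij}=M_iq_i$ by definition of $q_i$. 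Dividing by $2M_i$ then leaves
\[
\sum_{j\in\mathcal{C}_i}e_{ij}\bm{g}_{ij}-q_i\sum_{j\in\mathcal{C}_i}\bm{g}_{ij}=\sum_{j\in\mathcal{C}_i}(e_{ij}-q_i)\bm{g}_{ij},
\]
which is the left-hand side.

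The interpretive claim then follows by specializing to $\bm{g}_{ij}=\nabla_\theta\log\pi_\theta(o_{ij}\mid V_i)$. For a fake video $z_i=1$, and $\Delta_{ij}$ vanishes off $\mathcal{C}_i$. Hence the evidence part of the policy-gradient estimator, $\sum_j\Delta_{ij}\bm{g}_{ij}$, equals the left-hand side. By the identity just proved, this is a nonnegative combination, with weight $|e_{ij}-e_{ik}|/M_i$, of the differences of score functions, each pair oriented from the worse-evidence response toward the better-evidence one. This is consistent with Lemma~\ref{lem:egrr_ordering}.

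The main obstacle is bookkeeping rather than any substantive difficulty. The step needing care is the factor of $1/2$ when passing between unordered and ordered pairs, which is where the $1/M_i$ normalization in the statement comes from. The proof also uses $M_i>0$ so that $q_i$ is the true mean; the case $M_i=0$ is excluded in the statement.
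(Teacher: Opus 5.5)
Your proposal is correct and matches the paper's proof: both expand the right-hand side to $\sum_{j\in\mathcal{C}_i}e_{ij}\bm{g}_{ij}-\frac{1}{M_i}\bigl(\sum_{j\in\mathcal{C}_i}e_{ij}\bigr)\bigl(\sum_{j\in\mathcal{C}_i}\bm{g}_{ij}\bigr)$ and then use $\sum_{j\in\mathcal{C}_i}e_{ij}=M_iq_i$. Passing to ordered pairs is only a bookkeeping variant; note that the $\tfrac12$ there just offsets the double counting, while the $1/M_i$ itself comes from $q_i$ being the empirical mean.
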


\begin{proof}
Expanding the right-hand side of Eq.~\eqref{eq:app_pairwise_gradient} gives
\begin{align}
&\frac{1}{M_i}
  \sum_{\substack{j<k\\j,k\in\mathcal{C}_i}}
  (e_{ij}-e_{ik})(\bm{g}_{ij}-\bm{g}_{ik})\\
&\quad=\sum_{j\in\mathcal{C}_i}e_{ij}\bm{g}_{ij}
-\frac{1}{M_i}
 \left(\sum_{j\in\mathcal{C}_i}e_{ij}\right)
 \left(\sum_{j\in\mathcal{C}_i}\bm{g}_{ij}\right)\\
&\quad=\sum_{j\in\mathcal{C}_i}(e_{ij}-q_i)\bm{g}_{ij},
\end{align}
where the last equality uses Eq.~\eqref{eq:app_q}.
\end{proof}

\begin{theorem}[Expected evidence-improving gradient]
\label{thm:egrr_expected_gradient}
Fix a fake training video and sample $G$ responses independently from
$\pi_\theta$. Let $\ell(o)$ denote label correctness and define
\begin{equation}
    p_\theta=\Pr_{o\sim\pi_\theta}[\ell(o)=1],
    \qquad
    \mu_\theta=
    \mathbb{E}_{o\sim\pi_\theta}[e(o)\mid\ell(o)=1].
\end{equation}
Assume that $p_\theta>0$, that $e(o)$ and the correctness rule do not explicitly
depend on $\theta$, and that differentiation and expectation can be
interchanged. As in standard RLVR, all rule-based rewards and group statistics
are treated as stop-gradient quantities. Let
\begin{equation}
    \bm{H}_G
    =\sum_{j=1}^{G}
      \ell(o_j)(e(o_j)-q)\nabla_\theta\log\pi_\theta(o_j)
\label{eq:app_evidence_estimator}
\end{equation}
be the unnormalized evidence component of the policy-gradient estimator. Then
\begin{equation}
    \mathbb{E}[\bm{H}_G]
    =\kappa_G(p_\theta)\nabla_\theta\mu_\theta,
\label{eq:app_expected_evidence_gradient}
\end{equation}
where
\begin{equation}
    \kappa_G(p)
    =Gp-1+(1-p)^G
    =\sum_{m=2}^{G}(m-1)
      \binom{G}{m}p^m(1-p)^{G-m}\geq0.
\label{eq:app_kappa}
\end{equation}
For $G\geq2$ and $p_\theta>0$, $\kappa_G(p_\theta)>0$. Consequently,
\begin{equation}
    \left\langle
    \mathbb{E}[\bm{H}_G],\nabla_\theta\mu_\theta
    \right\rangle
    =\kappa_G(p_\theta)\|\nabla_\theta\mu_\theta\|_2^2\geq0.
\label{eq:app_ascent_alignment}
\end{equation}
Thus, before clipping and optional variance normalization, the expected EGRR
evidence update is an ascent direction for evidence quality conditioned on a
correct label. At the same time, Lemma~\ref{lem:egrr_conservation} guarantees
that this update does not change the sampled label-reward mass.
\end{theorem}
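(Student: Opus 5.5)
The plan is to compute $\mathbb{E}[\bm{H}_G]$ by conditioning on the label-correctness pattern of the group and reducing everything to two score-function identities under the conditional law $\pi_\theta(\cdot\mid\ell=1)$. Write $\bm{g}_j=\nabla_\theta\log\pi_\theta(o_j)$ and $M=\sum_j\ell(o_j)$. Since $\ell(o_j)=0$ kills the terms outside $\mathcal{C}$, we have $\bm{H}_G=\sum_{j\in\mathcal{C}}(e_j-q)\bm{g}_j$, and $\bm{H}_G=0$ when $M\le 1$ (for $M=1$, $q=e_j$). Because the $o_j$ are i.i.d., conditioning on the event $\mathcal{C}=S$ with $|S|=m\ge1$ leaves the responses indexed by $S$ i.i.d. from $\pi_\theta(\cdot\mid\ell=1)$. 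It also leaves them independent of the remaining responses, which do not enter $\bm{H}_G$.

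Next I record two identities, using that $e$ and $\ell$ do not depend on $\theta$ and that differentiation and summation/expectation can be interchanged:
\begin{equation*}
\mathbb{E}[\bm{g}\mid\ell=1]=\frac{1}{p_\theta}\sum_{o:\ell(o)=1}\nabla_\theta\pi_\theta(o)=\frac{\nabla_\theta p_\theta}{p_\theta},
\qquad
\mathbb{E}[e\,\bm{g}\mid\ell=1]=\frac{\nabla_\theta(p_\theta\mu_\theta)}{p_\theta}.
\end{equation*}
Expanding $q=\frac1m\sum_{k\in S}e_k$, I treat the $k=j$ and $k\neq j$ terms separately and use conditional independence for $k\ne j$. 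For each $j\in S$ this gives
\begin{equation*}
\mathbb{E}\bigl[(e_j-q)\bm{g}_j\mid\mathcal{C}=S\bigr]=\tfrac{m-1}{m}\Bigl(\mathbb{E}[e\bm{g}\mid\ell=1]-\mu_\theta\,\mathbb{E}[\bm{g}\mid\ell=1]\Bigr)=\tfrac{m-1}{m}\nabla_\theta\mu_\theta ,
\end{equation*}
where the final step is the product rule $\nabla(p\mu)=\mu\nabla p+p\nabla\mu$. Summing over the $m$ indices in $S$ gives $\mathbb{E}[\bm{H}_G\mid\mathcal{C}=S]=(m-1)\nabla_\theta\mu_\theta$, which also matches the value $0$ when $m=1$.

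To finish, I average over $M\sim\mathrm{Bin}(G,p_\theta)$. This gives $\mathbb{E}[(M-1)\mathbb{I}[M\ge1]]=\mathbb{E}[M]-1+\Pr[M=0]=Gp-1+(1-p)^G$, which yields Eq.~\eqref{eq:app_kappa} including its binomial-sum form. Every summand in that form is nonnegative. For $G\ge2$ and $p_\theta\in(0,1]$, the $m=2$ term, or the $m=G$ term when $p=1$, is strictly positive, so $\kappa_G>0$. Eq.~\eqref{eq:app_ascent_alignment} then follows by taking the inner product with $\nabla_\theta\mu_\theta$. The final remark is just Lemma~\ref{lem:egrr_conservation}.

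I expect the main obstacle to be making the conditioning step rigorous. One must justify that, given the random correctness pattern, the correct responses are i.i.d. from the conditional policy. One must also check that the stop-gradient treatment of $q$ is what allows $q$ to be expanded as a plain average, with no extra derivative terms. As a cross-check, I would rederive the result via Lemma~\ref{lem:egrr_pairwise_gradient}: each unordered pair in $\mathcal{C}$ contributes $\frac{1}{m}\mathbb{E}[(e_j-e_k)(\bm{g}_j-\bm{g}_k)]=\frac{2}{m}\nabla_\theta\mu_\theta$, and there are $\binom{m}{2}$ pairs, giving $(m-1)\nabla_\theta\mu_\theta$ again.
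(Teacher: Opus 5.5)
Your proposal is correct and follows essentially the same proof as the paper. In both, you condition on the label-correct responses being i.i.d. from $\pi_\theta(\cdot\mid\ell=1)$ (the paper conditions on $M=m$, you on the pattern $\mathcal{C}=S$), show that the sample-centered sum has conditional expectation $(m-1)\operatorname{Cov}_{\rho_\theta}(e,\bm{g})=(m-1)\nabla_\theta\mu_\theta$, and then average over $M\sim\operatorname{Binomial}(G,p_\theta)$. The paper obtains the covariance identity from $\nabla_\theta\log\rho_\theta=\bm{g}-\nabla_\theta\log p_\theta$, while you use the product rule on $p_\theta\mu_\theta$; this is the same computation, and your explicit positivity check of $\kappa_G$ at $p_\theta=1$ is only a small added detail.
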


\begin{proof}
Let $M=\sum_{j=1}^{G}\ell(o_j)$. Conditioned on $M=m\geq1$, the $m$
label-correct responses are identically distributed according to
\begin{equation}
    \rho_\theta(o)
    =\pi_\theta(o\mid \ell(o)=1).
\end{equation}
Let $\bm{g}(o)=\nabla_\theta\log\pi_\theta(o)$. For $m$ independent samples
from $\rho_\theta$, direct expansion of the sample-centered sum gives
\begin{align}
&\mathbb{E}\left[
  \sum_{a=1}^{m}(e_a-\bar e)\bm{g}_a
  \;\middle|\; M=m\right]\\
&\qquad=(m-1)
\left(
\mathbb{E}_{\rho_\theta}[e\bm{g}]
-\mathbb{E}_{\rho_\theta}[e]
 \mathbb{E}_{\rho_\theta}[\bm{g}]
\right)\\
&\qquad=(m-1)\operatorname{Cov}_{\rho_\theta}(e,\bm{g}).
\label{eq:app_conditional_covariance}
\end{align}
We next differentiate the conditional mean
$\mu_\theta=\mathbb{E}_{\rho_\theta}[e]$. Since
\begin{equation}
    \nabla_\theta\log\rho_\theta(o)
    =\bm{g}(o)-\nabla_\theta\log p_\theta,
\end{equation}
we have
\begin{align}
    \nabla_\theta\mu_\theta
    &=\mathbb{E}_{\rho_\theta}
      \left[e\nabla_\theta\log\rho_\theta(o)\right]\\
    &=\mathbb{E}_{\rho_\theta}[e\bm{g}]
      -\mu_\theta\mathbb{E}_{\rho_\theta}[\bm{g}]\\
    &=\operatorname{Cov}_{\rho_\theta}(e,\bm{g}).
\label{eq:app_covariance_gradient}
\end{align}
Combining Eqs.~\eqref{eq:app_conditional_covariance} and
\eqref{eq:app_covariance_gradient} yields
\begin{equation}
    \mathbb{E}[\bm{H}_G\mid M=m]
    =(m-1)\nabla_\theta\mu_\theta
\end{equation}
for $m\geq1$. The estimator is zero when $m=0$. Since
$M\sim\operatorname{Binomial}(G,p_\theta)$,
\begin{align}
    \mathbb{E}[\bm{H}_G]
    &=\mathbb{E}[(M-1)\mathbb{I}[M\geq1]]
      \nabla_\theta\mu_\theta\\
    &=\left(\mathbb{E}[M]-\Pr[M\geq1]\right)
      \nabla_\theta\mu_\theta\\
    &=\left(Gp_\theta-1+(1-p_\theta)^G\right)
      \nabla_\theta\mu_\theta.
\end{align}
This proves Eqs.~\eqref{eq:app_expected_evidence_gradient} and
\eqref{eq:app_kappa}. Equation~\eqref{eq:app_ascent_alignment} follows by
taking the inner product with $\nabla_\theta\mu_\theta$.
\end{proof}

\begin{corollary}[Progressive activation]
\label{cor:egrr_progressive}
The evidence-gradient coefficient satisfies
\begin{equation}
    \kappa_G(p)
    =\binom{G}{2}p^2+\mathcal{O}(p^3)
    \quad\text{as }p\rightarrow0,
    \qquad
    \kappa_G(1)=G-1.
\end{equation}
Therefore, EGRR activates evidence optimization only when a group contains at
least two label-correct responses that can be compared. Label learning remains
available for every label-correct response through the conserved unit reward.
As label accuracy improves, the evidence-learning signal becomes increasingly
dense.
\end{corollary}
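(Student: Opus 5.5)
The plan starts from the closed form of $\kappa_G$ in Eq.~\eqref{eq:app_kappa} from Theorem~\ref{thm:egrr_expected_gradient}, namely $\kappa_G(p)=Gp-1+(1-p)^G$. Both claims of Corollary~\ref{cor:egrr_progressive} then reduce to elementary manipulations of this polynomial. The interpretive statements after the display will be justified by combining these expansions with the binomial representation in Eq.~\eqref{eq:app_kappa} and with Lemma~\ref{lem:egrr_conservation}.

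For the small-$p$ asymptotics, I will expand $(1-p)^G$ by the binomial theorem:
\[
(1-p)^G = 1 - Gp + \binom{G}{2}p^2 - \sum_{m=3}^{G}(-1)^{m+1}\binom{G}{m}p^m .
\]
Substituting, the constant and linear terms cancel against $Gp-1$. This leaves $\kappa_G(p)=\binom{G}{2}p^2+\sum_{m\geq3}(-1)^m\binom{G}{m}p^m$. The remainder is a finite polynomial with no terms below degree three, so it is $\mathcal{O}(p^3)$ as $p\to0$. For $G=2$ the remainder is empty and $\kappa_2(p)=p^2$ exactly. The endpoint value $\kappa_G(1)=G-1+0=G-1$ is immediate.

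For the interpretation, I will use the second form in Eq.~\eqref{eq:app_kappa}, $\kappa_G(p)=\sum_{m\geq2}(m-1)\binom{G}{m}p^m(1-p)^{G-m}=\mathbb{E}[(M-1)\mathbb{I}[M\geq1]]$. Groups with $M=0$ or $M=1$ contribute zero weight, which matches the observation in Theorem~\ref{thm:egrr_bounds} that $M_i=1$ forces $e_{ij}=q_i$. So evidence gradients arise only from groups with at least two label-correct responses. Lemma~\ref{lem:egrr_pairwise_gradient} makes this explicit: the evidence term is a sum over pairs in $\mathcal{C}_i$, which is empty when $M_i\leq1$.

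The claim that label learning remains available follows from Theorem~\ref{thm:egrr_bounds} together with Lemma~\ref{lem:egrr_conservation}. Every correct response receives reward at least $1/M_i>0$, and the label-correct mean stays at $1$. The monotonic densification needs $\kappa_G'(p)=G\bigl(1-(1-p)^{G-1}\bigr)\geq0$ on $[0,1]$, with strict inequality for $p>0$ and $G\geq2$. Hence $\kappa_G$ increases from $0$ to $G-1$.

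There is no real obstacle here. The only point requiring care is reading "increasingly dense" as monotonicity of $\kappa_G$ in $p$, which the derivative computation settles.
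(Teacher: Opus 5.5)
Your proposal is correct and follows the paper's proof. Both expand $(1-p)^G$ binomially so that the constant and linear terms cancel against $Gp-1$, leaving $\binom{G}{2}p^2+\mathcal{O}(p^3)$, and both read off $\kappa_G(1)=G-1$ directly from Eq.~\eqref{eq:app_kappa}. The paper stops there and leaves the interpretive sentences as commentary, so your additional arguments are a sound strengthening rather than a different route: the weights $(m-1)$ vanishing for $M\le 1$, the pairwise form from Lemma~\ref{lem:egrr_pairwise_gradient}, and monotonicity via $\kappa_G'(p)=G\bigl(1-(1-p)^{G-1}\bigr)\ge 0$.
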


\begin{proof}
The expansion follows from the binomial series
$(1-p)^G=1-Gp+\binom{G}{2}p^2+\mathcal{O}(p^3)$. The value at $p=1$ follows
directly from Eq.~\eqref{eq:app_kappa}.
\end{proof}

\subsection{Comparison with Naive Multiplicative Coupling}

\begin{proposition}[Removal of evidence-induced label gating]
\label{prop:egrr_vs_multiplicative}
Consider the naive multiplicative semantic reward
\begin{equation}
    r_{ij}^{\mathrm{mult}}
    =\ell_{ij}\bigl[(1-z_i)+z_ie_{ij}\bigr].
\label{eq:app_multiplicative_reward}
\end{equation}
For a fake video with $M_i>0$, its total reward is
\begin{equation}
    \sum_{j=1}^{G}r_{ij}^{\mathrm{mult}}=M_iq_i,
\label{eq:app_multiplicative_mass}
\end{equation}
whereas EGRR always assigns total reward $M_i$. If all label-correct responses
have the same evidence quality $c\in[0,1]$, then
\begin{equation}
    r_{ij}^{\mathrm{mult}}=c\ell_{ij},
    \qquad
    r_{ij}^{\mathrm{EGRR}}=\ell_{ij}.
\end{equation}
In particular, when $c=0$, multiplicative coupling assigns zero reward to both
label-correct and label-incorrect responses and removes the label-learning
signal. EGRR reduces exactly to label-level RLVR. Thus, EGRR introduces
evidence preference without multiplicatively gating correctness supervision.
\end{proposition}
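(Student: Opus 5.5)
The argument is a direct substitution, with the fake-video case fixed throughout. Set $z_i=1$ in Eq.~\eqref{eq:app_multiplicative_reward}, so that $r_{ij}^{\mathrm{mult}}=\ell_{ij}e_{ij}$.

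\textbf{Step 1: total multiplicative reward.} Summing over $j$ gives
\[
\sum_{j=1}^{G}r_{ij}^{\mathrm{mult}}=\sum_{j=1}^{G}\ell_{ij}e_{ij}.
\]
For $M_i>0$, the definition of the baseline in Eq.~\eqref{eq:app_q} says exactly that $\sum_{j}\ell_{ij}e_{ij}=M_iq_i$, which is Eq.~\eqref{eq:app_multiplicative_mass}. The matching EGRR claim, total reward $M_i$, is Eq.~\eqref{eq:app_mass_conservation} in Lemma~\ref{lem:egrr_conservation}, so I would simply cite it.

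\textbf{Step 2: constant evidence quality.} Assume $e_{ij}=c$ for all $j\in\mathcal{C}_i$.
\begin{itemize}
\item Multiplicative reward: for $j\in\mathcal{C}_i$ we get $r_{ij}^{\mathrm{mult}}=c$. For $j\notin\mathcal{C}_i$ the factor $\ell_{ij}=0$ kills the reward, whatever $e_{ij}$ is. Hence $r_{ij}^{\mathrm{mult}}=c\ell_{ij}$ for every $j$.
\item EGRR reward: Eq.~\eqref{eq:app_q} gives $q_i=c$, so every residual $e_{ij}-q_i$ vanishes on $\mathcal{C}_i$. Then Eq.~\eqref{eq:app_egrr_semantic} yields $r_{ij}^{\mathrm{EGRR}}=\ell_{ij}$. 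Equivalently, this is the zero-variance case of Corollary~\ref{cor:egrr_fallback}.
\end{itemize}

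\textbf{Step 3: the case $c=0$.} Multiplicative coupling now assigns zero reward to every response in the group. The rewards are therefore constant, the group-centered advantage is identically zero, and the group carries no label gradient. EGRR instead gives $\ell_{ij}$, which is exactly label-level RLVR. When $0<M_i<G$, the centered label advantage $\bm{a}_i^{\mathrm{label}}$ is nonzero, so label supervision survives under EGRR.

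\textbf{Obstacle.} No step is technically hard. The only point needing care is turning "removes the label-learning signal" into a formal statement. I would phrase it through the centered advantage, as in Theorem~\ref{thm:egrr_orthogonal}: the multiplicative advantage vector is zero, while the EGRR advantage equals $\bm{a}_i^{\mathrm{label}}\neq 0$ whenever the group contains both correct and incorrect responses. This makes precise the contrast between multiplicative gating and EGRR's correctness-preserving calibration.
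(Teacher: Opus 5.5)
Your proposal is correct and takes the same route as the paper's proof. Both specialize to a fake video so that $r_{ij}^{\mathrm{mult}}=\ell_{ij}e_{ij}$, read off $\sum_{j}\ell_{ij}e_{ij}=M_iq_i$ from the definition of $q_i$, and substitute $q_i=c$ into the EGRR reward. Your Step 3 adds one thing the paper leaves informal: you state ``removes the label-learning signal'' as a vanishing group-centered semantic advantage, contrasted with the nonzero $\bm{a}_i^{\mathrm{label}}$ that EGRR retains when $0<M_i<G$.
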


\begin{proof}
For a fake video, Eq.~\eqref{eq:app_multiplicative_reward} reduces to
$r_{ij}^{\mathrm{mult}}=\ell_{ij}e_{ij}$. Therefore,
\begin{equation}
    \sum_{j=1}^{G}r_{ij}^{\mathrm{mult}}
    =\sum_{j\in\mathcal{C}_i}e_{ij}=M_iq_i,
\end{equation}
which proves Eq.~\eqref{eq:app_multiplicative_mass}. If every label-correct
response has evidence quality $c$, then $q_i=c$. Substitution into
Eq.~\eqref{eq:app_egrr_semantic} gives
$r_{ij}^{\mathrm{EGRR}}=\ell_{ij}$, while the multiplicative reward remains
$c\ell_{ij}$.
\end{proof}

\paragraph{Summary.}
The above results establish five properties of EGRR. First, it exactly
conserves label-level reward mass for every sampled group. Second, it retains
strict reward separation between label-correct and label-incorrect responses.
Third, it adds an orthogonal within-correct evidence signal without shifting
the group baseline. Fourth, its expected evidence gradient improves the
conditional evidence quality whenever comparative evidence is available.
Finally, it falls back to label-level RLVR when evidence is non-discriminative,
avoiding the evidence-induced suppression suffered by multiplicative reward
coupling.

\section{Real-Fake Video Pair Examples}
\label{app:data-construction}

The figure presents examples of the constructed real-fake video pairs. For each video generation model, two examples are provided. Each example consists of three rows: (1) the original authentic video; (2) the boundary frames provided to the video generation model and the generated synthetic segment; and (3) the final forged video obtained by aligning the generated segment with the removed content in terms of frame rate and resolution before inserting it back into the original position. The AI-generated segments are highlighted with red bounding boxes. The generated segments are aligned with the removed segments in terms of duration, resolution, and frame rate, while the surrounding prefix and suffix frames are directly copied from the original videos. Since the manipulated intervals are explicitly specified during the construction process, the corresponding forgery boundaries can be automatically recorded, providing objective and auditable evidence.

\begin{figure*}[tbp]\centering
{\footnotesize Original real video}\\[2pt]
\includegraphics[width=0.92\linewidth]{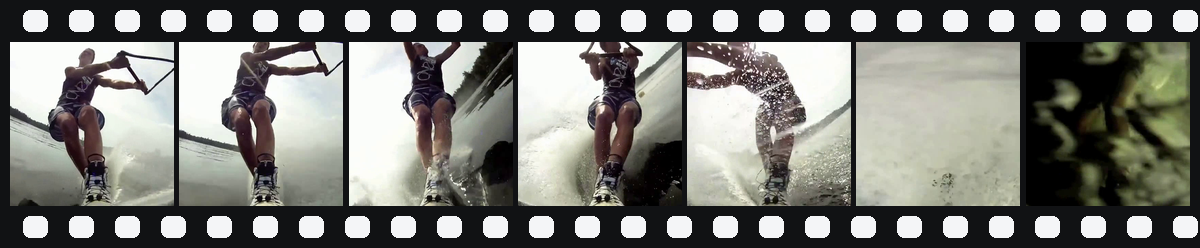}\\[6pt]
{\footnotesize Boundary (start/end) frames \;$\rightarrow$\; generative model \;$\rightarrow$\; generated segment}\\[2pt]
\raisebox{-0.5\height}{\includegraphics[height=1.7cm]{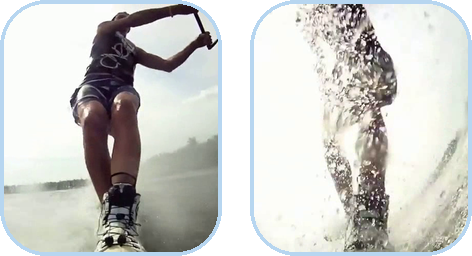}}\qquad
\raisebox{-0.5\height}{\includegraphics[height=1.7cm]{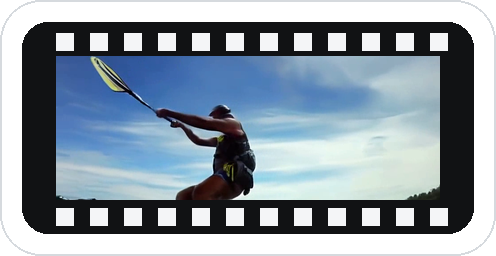}}\\[6pt]
{\footnotesize Constructed video after re-inserting the generated segment (boxed in red)}\\[2pt]
\includegraphics[width=0.92\linewidth]{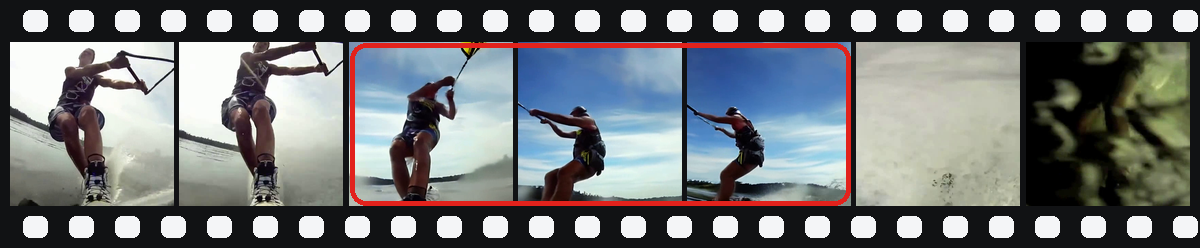}
\caption{Data construction with \textbf{Wan2.2-Fun-In}. Source video: ActivityNet (\emph{Waterskiing}); generation interval $[1.13, 3.40]$\,s of a $5$\,s clip.}
\label{fig:dg-wan22-waterskiing}
\end{figure*}

\begin{figure*}[tbp]\centering
{\footnotesize Original real video}\\[2pt]
\includegraphics[width=0.92\linewidth]{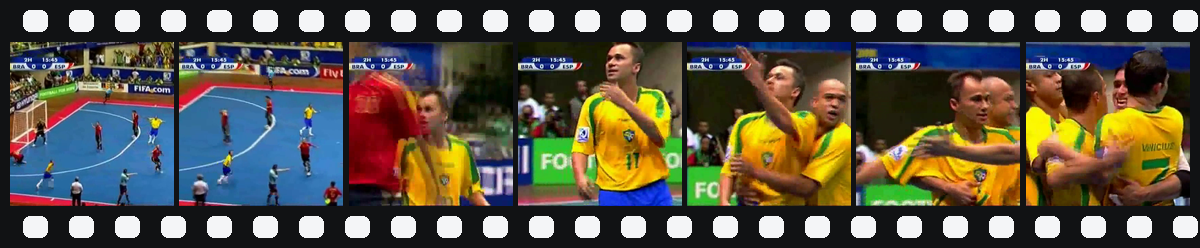}\\[6pt]
{\footnotesize Boundary (start/end) frames \;$\rightarrow$\; generative model \;$\rightarrow$\; generated segment}\\[2pt]
\raisebox{-0.5\height}{\includegraphics[height=1.7cm]{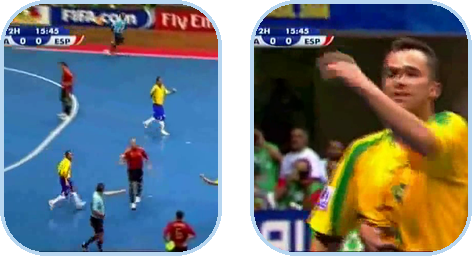}}\qquad
\raisebox{-0.5\height}{\includegraphics[height=1.7cm]{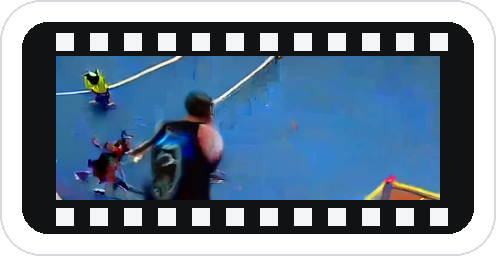}}\\[6pt]
{\footnotesize Constructed video after re-inserting the generated segment (boxed in red)}\\[2pt]
\includegraphics[width=0.92\linewidth]{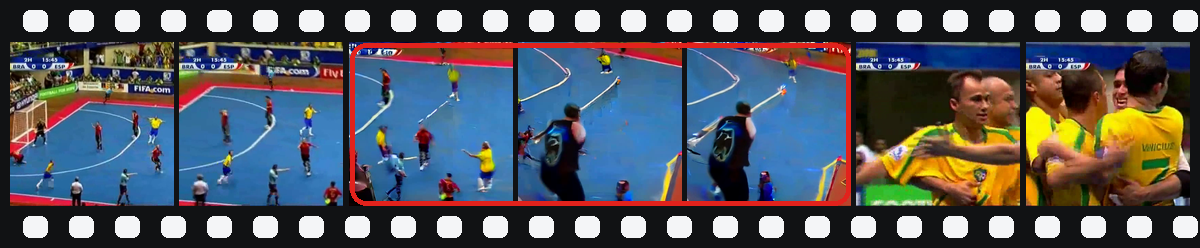}
\caption{Data construction with \textbf{Wan2.2-Fun-In}. Source video: ActivityNet (\emph{Futsal}); generation interval $[1.27, 3.53]$\,s of a $5$\,s clip.}
\label{fig:dg-wan22-futsal}
\end{figure*}

\begin{figure*}[tbp]\centering
{\footnotesize Original real video}\\[2pt]
\includegraphics[width=0.92\linewidth]{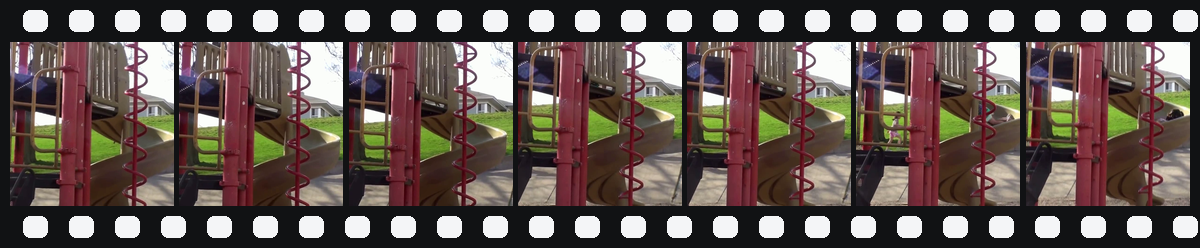}\\[6pt]
{\footnotesize Boundary (start/end) frames \;$\rightarrow$\; generative model \;$\rightarrow$\; generated segment}\\[2pt]
\raisebox{-0.5\height}{\includegraphics[height=1.7cm]{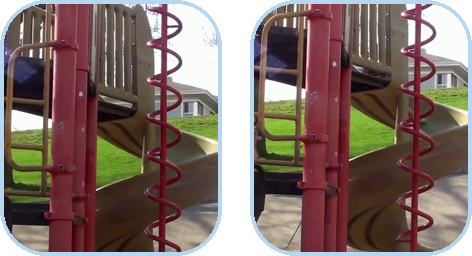}}\qquad
\raisebox{-0.5\height}{\includegraphics[height=1.7cm]{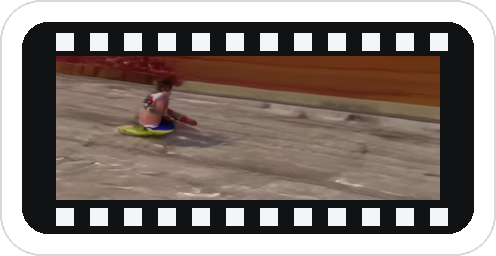}}\\[6pt]
{\footnotesize Constructed video after re-inserting the generated segment (boxed in red)}\\[2pt]
\includegraphics[width=0.92\linewidth]{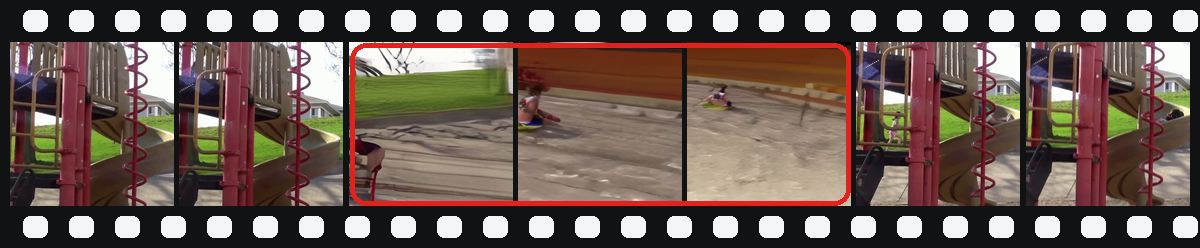}
\caption{Data construction with \textbf{LTX-Video}. Source video: ActivityNet (\emph{Fun sliding down}); generation interval $[0.73, 3.13]$\,s of a $5$\,s clip.}
\label{fig:dg-ltx-sliding}
\end{figure*}

\begin{figure*}[tbp]\centering
{\footnotesize Original real video}\\[2pt]
\includegraphics[width=0.92\linewidth]{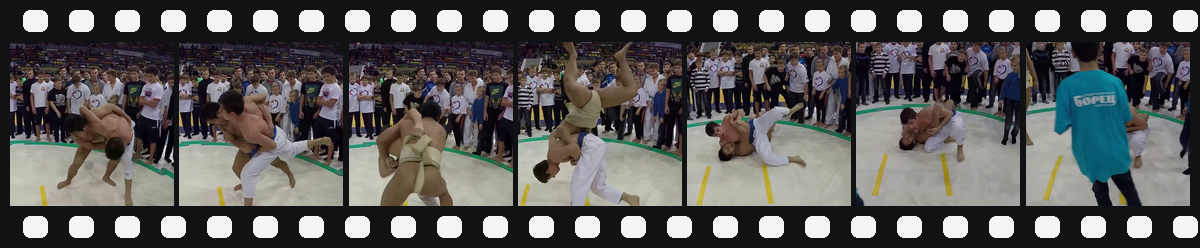}\\[6pt]
{\footnotesize Boundary (start/end) frames \;$\rightarrow$\; generative model \;$\rightarrow$\; generated segment}\\[2pt]
\raisebox{-0.5\height}{\includegraphics[height=1.7cm]{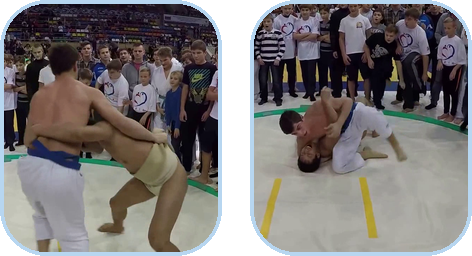}}\qquad
\raisebox{-0.5\height}{\includegraphics[height=1.7cm]{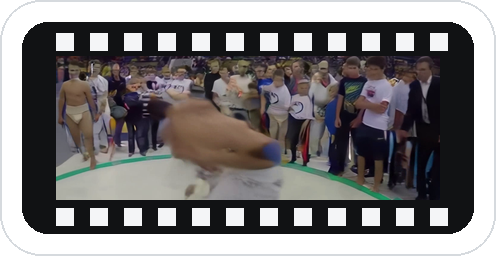}}\\[6pt]
{\footnotesize Constructed video after re-inserting the generated segment (boxed in red)}\\[2pt]
\includegraphics[width=0.92\linewidth]{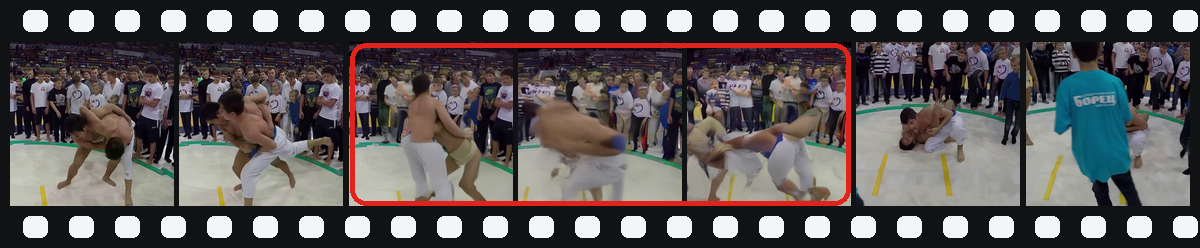}
\caption{Data construction with \textbf{LTX-Video}. Source video: ActivityNet (\emph{Sumo}); generation interval $[1.47, 3.87]$\,s of a $5$\,s clip.}
\label{fig:dg-ltx-sumo}
\end{figure*}

\begin{figure*}[tbp]\centering
{\footnotesize Original real video}\\[2pt]
\includegraphics[width=0.92\linewidth]{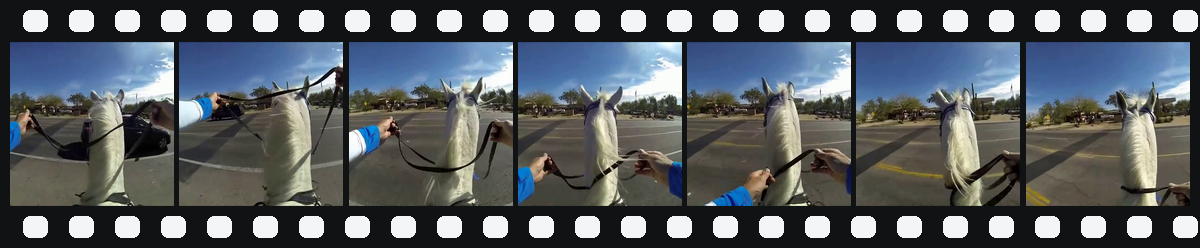}\\[6pt]
{\footnotesize First/last frames \;$\rightarrow$\; generative model \;$\rightarrow$\; generated segment}\\[2pt]
\raisebox{-0.5\height}{\includegraphics[height=1.7cm]{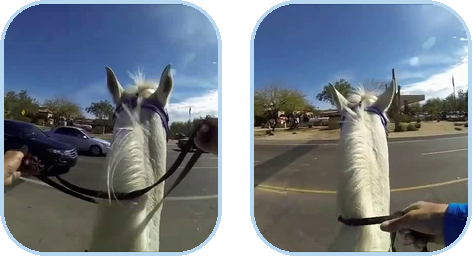}}\qquad
\raisebox{-0.5\height}{\includegraphics[height=1.7cm]{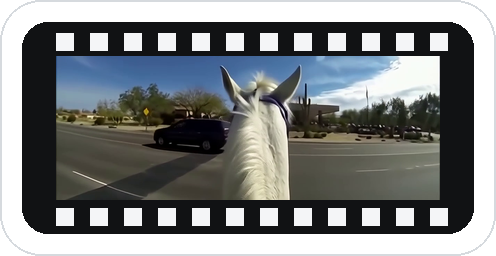}}\\[6pt]
{\footnotesize Constructed video after re-inserting the generated segment (boxed in red)}\\[2pt]
\includegraphics[width=0.92\linewidth]{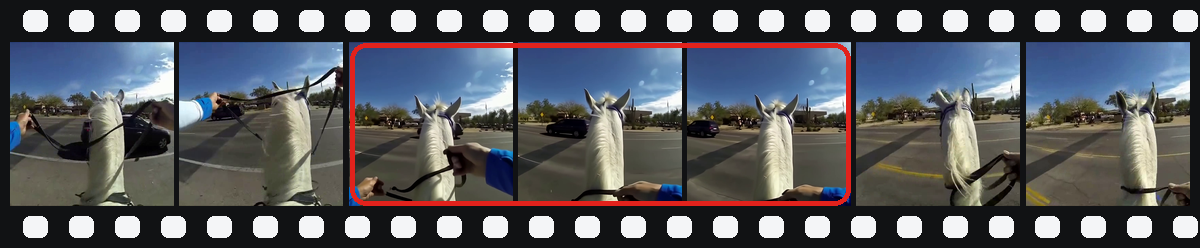}
\caption{Data construction with \textbf{Wan2.7-i2v}. Source video: ActivityNet (\emph{Horseback riding}); generation interval $[1.20, 3.40]$\,s of a $5$\,s clip.}
\label{fig:dg-wan27-horseback}
\end{figure*}

\begin{figure*}[tbp]\centering
{\footnotesize Original real video}\\[2pt]
\includegraphics[width=0.92\linewidth]{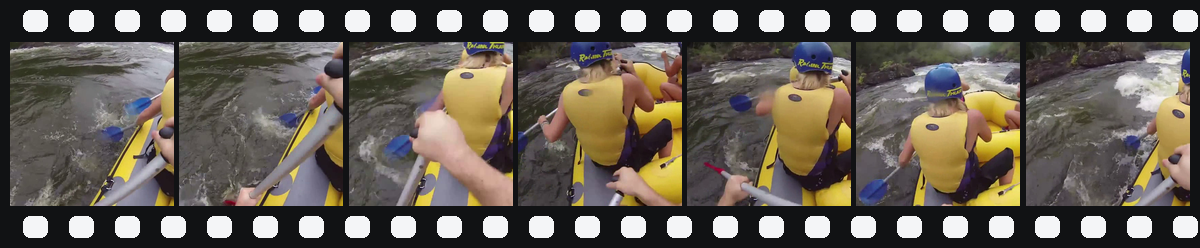}\\[6pt]
{\footnotesize First/last frames \;$\rightarrow$\; generative model \;$\rightarrow$\; generated segment}\\[2pt]
\raisebox{-0.5\height}{\includegraphics[height=1.7cm]{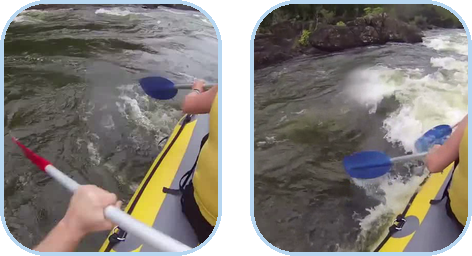}}\qquad
\raisebox{-0.5\height}{\includegraphics[height=1.7cm]{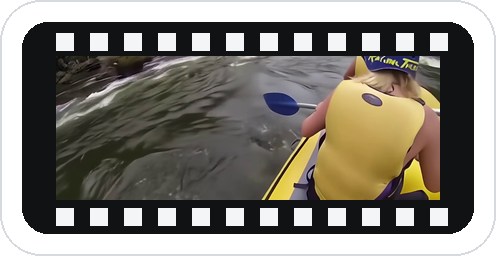}}\\[6pt]
{\footnotesize Constructed video after re-inserting the generated segment (boxed in red)}\\[2pt]
\includegraphics[width=0.92\linewidth]{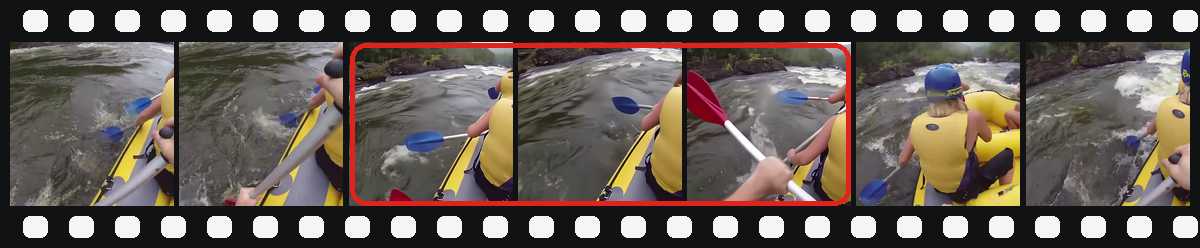}
\caption{Data construction with \textbf{Wan2.7-i2v}. Source video: ActivityNet (\emph{Rafting}); generation interval $[1.20, 3.40]$\,s of a $5$\,s clip.}
\label{fig:dg-wan27-rafting}
\end{figure*}

\begin{figure*}[tbp]\centering
{\footnotesize Original real video}\\[2pt]
\includegraphics[width=0.92\linewidth]{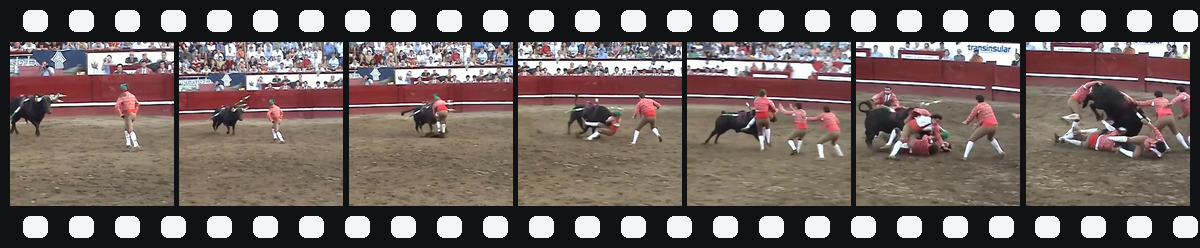}\\[6pt]
{\footnotesize First/last frames \;$\rightarrow$\; generative model \;$\rightarrow$\; generated segment}\\[2pt]
\raisebox{-0.5\height}{\includegraphics[height=1.7cm]{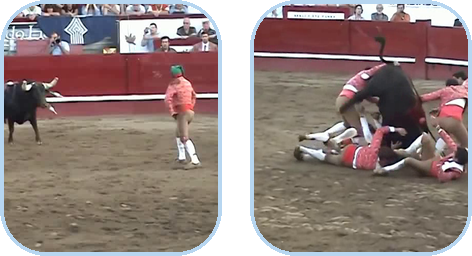}}\qquad
\raisebox{-0.5\height}{\includegraphics[height=1.7cm]{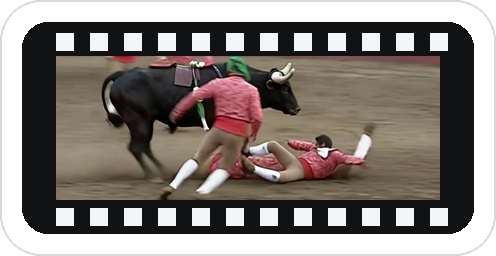}}\\[6pt]
{\footnotesize Constructed video after re-inserting the generated segment (boxed in red)}\\[2pt]
\includegraphics[width=0.92\linewidth]{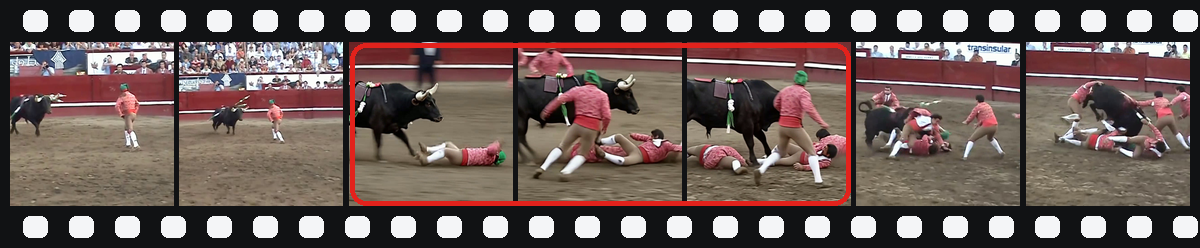}
\caption{Data construction with \textbf{Seedance 1.0 pro}. Source video: ActivityNet (\emph{Bullfighting}); generation interval $[1.20, 3.40]$\,s of a $5$\,s clip.}
\label{fig:dg-seedance-bullfight}
\end{figure*}

\begin{figure*}[tbp]\centering
{\footnotesize Original real video}\\[2pt]
\includegraphics[width=0.92\linewidth]{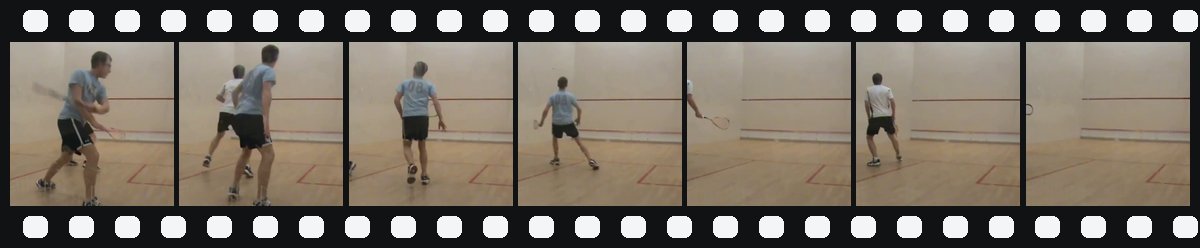}\\[6pt]
{\footnotesize First/last frames \;$\rightarrow$\; generative model \;$\rightarrow$\; generated segment}\\[2pt]
\raisebox{-0.5\height}{\includegraphics[height=1.7cm]{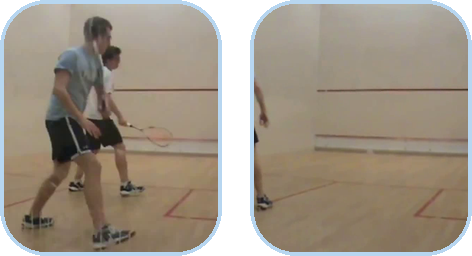}}\qquad
\raisebox{-0.5\height}{\includegraphics[height=1.7cm]{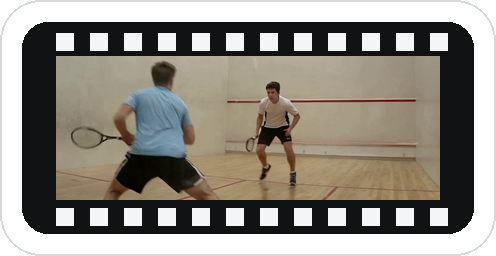}}\\[6pt]
{\footnotesize Constructed video after re-inserting the generated segment (boxed in red)}\\[2pt]
\includegraphics[width=0.92\linewidth]{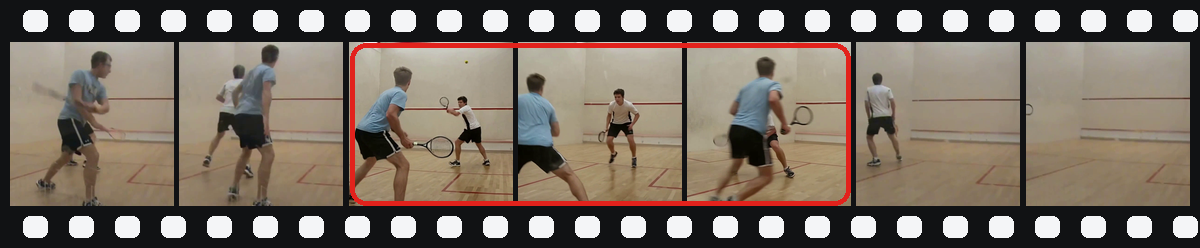}
\caption{Data construction with \textbf{Seedance 1.0 pro}. Source video: ActivityNet (\emph{Playing squash}); generation interval $[1.20, 3.40]$\,s of a $5$\,s clip.}
\label{fig:dg-seedance-squash}
\end{figure*}

\begin{figure*}[tbp]\centering
{\footnotesize Original real video}\\[2pt]
\includegraphics[width=0.92\linewidth]{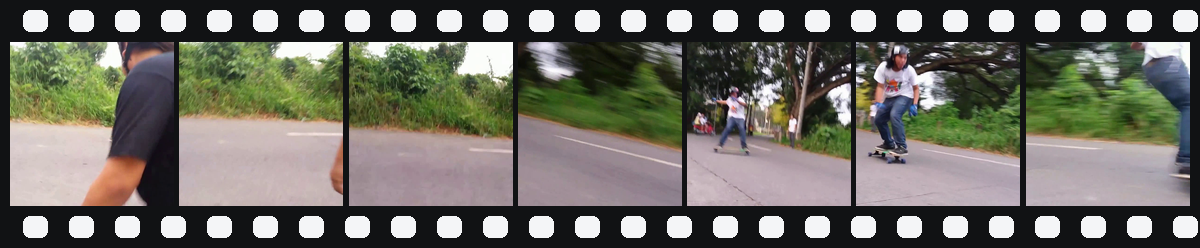}\\[6pt]
{\footnotesize First/last frames \;$\rightarrow$\; generative model \;$\rightarrow$\; generated segment}\\[2pt]
\raisebox{-0.5\height}{\includegraphics[height=1.7cm]{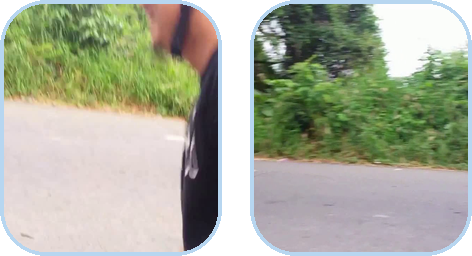}}\qquad
\raisebox{-0.5\height}{\includegraphics[height=1.7cm]{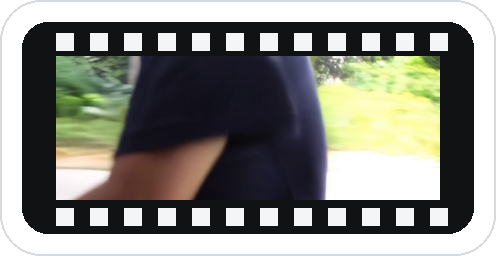}}\\[6pt]
{\footnotesize Constructed video after re-inserting the generated segment (boxed in red)}\\[2pt]
\includegraphics[width=0.92\linewidth]{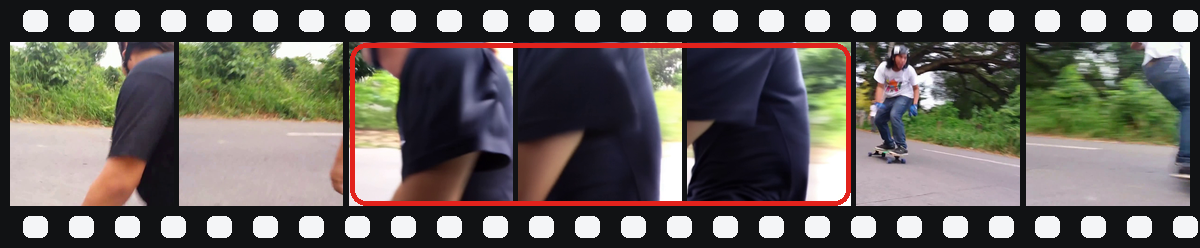}
\caption{Data construction with \textbf{SkyReels-V2-DF-14B}. Source video: ActivityNet (\emph{Longboarding}); generation interval $[1.20, 3.40]$\,s of a $5$\,s clip.}
\label{fig:dg-skyreels-longboarding}
\end{figure*}

\begin{figure*}[tbp]\centering
{\footnotesize Original real video}\\[2pt]
\includegraphics[width=0.92\linewidth]{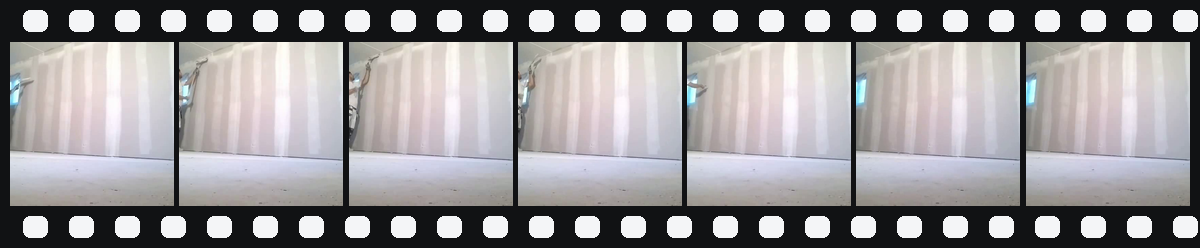}\\[6pt]
{\footnotesize First/last frames \;$\rightarrow$\; generative model \;$\rightarrow$\; generated segment}\\[2pt]
\raisebox{-0.5\height}{\includegraphics[height=1.7cm]{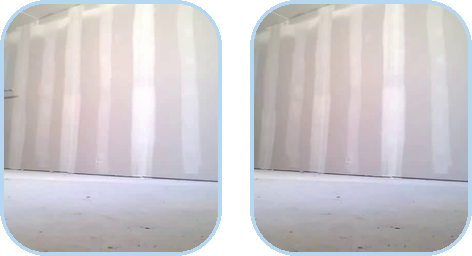}}\qquad
\raisebox{-0.5\height}{\includegraphics[height=1.7cm]{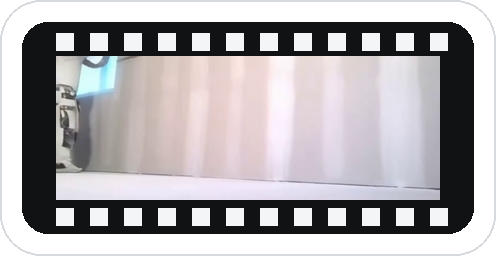}}\\[6pt]
{\footnotesize Constructed video after re-inserting the generated segment (boxed in red)}\\[2pt]
\includegraphics[width=0.92\linewidth]{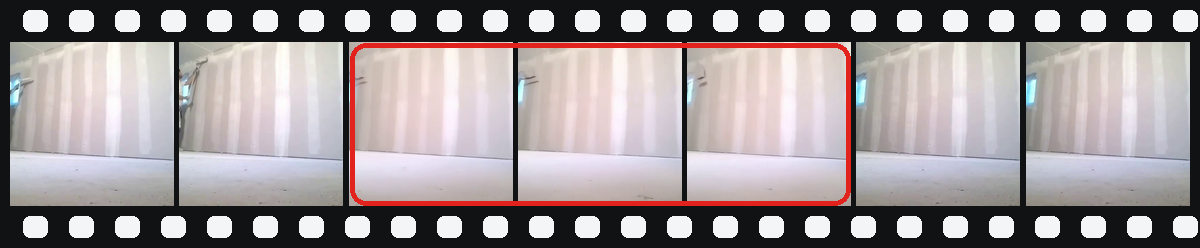}
\caption{Data construction with \textbf{SkyReels-V2-DF-14B}. Source video: ActivityNet (\emph{Plastering}); generation interval $[1.20, 3.40]$\,s of a $5$\,s clip.}
\label{fig:dg-skyreels-plastering}
\end{figure*}

\end{document}